\newcommand{\X}{{\cal X}}
\newcommand{\Z}{{\cal Z}}
\newcommand{\B}{{\cal B}}
\newcommand{\F}{{\cal F}}
\newcommand{\Y}{{\cal Y}}
\renewcommand{\S}{{\cal S}}
\newcommand{\Score}{\ensuremath{\textrm{Score}}}
\newcommand{\x}{{\bm{x}}}
\newcommand{\z}{{\bm{z}}}
\newcommand{\truerisk}{\ensuremath{R_{\textrm{0/1}}}}
\newcommand{\Nesyrisk}{\ensuremath{R_{\textrm{NeSy}}}}
\newcommand{\empNesyrisk}{\ensuremath{\hat{R}_{\textrm{NeSy}}}}
\newcommand{\PNLrisk}{\ensuremath{R_{\textrm{PNL}}}}
\newcommand{\AAABLrisk}{\ensuremath{R_{\textrm{A}^3}}}
\newcommand{\ABLrisk}{\ensuremath{R_{\textrm{ABL}}}}
\newcommand{\KB}{\ensuremath{\mathtt{KB}}}
\newcommand{\Exp}{{\mathbb E}}  % Expectation
\newcommand{\Ind}{{\mathbb I}}  % Indicator Function
\newcommand{\AAABL}{\ensuremath{\text{A}^3\text{BL}}}
\newcommand{\nesytask}{\ensuremath{\mathcal{T}}}
\newcommand{\ambiguityDegree}{DCSP solution disagreement}
\newcommand*{\img}[1]{%
  \raisebox{-.2\baselineskip}{%
    \includegraphics[height=0.8\baselineskip, width=0.8\baselineskip, keepaspectratio]{#1}%
  }%
}
\newcommand{\IMGO}{{\img{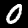}}}
\newcommand{\IMGI}{{\img{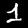}}}
\newcommand{\IMGII}{{\img{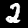}}}
\newcommand{\IMGNine}{{\img{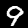}}}
\definecolor{lightred}{RGB}{216,144,144}
\definecolor{slightred}{RGB}{251,243,243}
\definecolor{slightgreen}{RGB}{243,248,243}
\definecolor{slightblue}{RGB}{28,130,185}
\definecolor{sgreen}{RGB}{0, 128, 0}
\newcommand{\subalign}[1]{%
  \vcenter{%
    \Let@ \restore@math@cr \default@tag
    \baselineskip\fontdimen10 \scriptfont\tw@
    \advance\baselineskip\fontdimen12 \scriptfont\tw@
    \lineskip\thr@@\fontdimen8 \scriptfont\thr@@
    \lineskiplimit\lineskip
    \ialign{\hfil$\m@th\scriptstyle##$&$\m@th\scriptstyle{}##$\hfil\crcr
      #1\crcr
    }%
  }%
}
\DeclareDocumentCommand{\citealtt}{o m}{%
  \IfNoValueTF{#1} % Check if the optional note is given
  {\citeauthor{#2}~(\citeyear{#2})} % If no note, just use author and year
  {#1~(\citeauthor{#2},~\citeyear{#2})} % If note is given, display note (author, year)
}
\crefname{equation}{}{} % Equation style should be (1) rather than Eq. (1). The former is cool and classical.
\theoremstyle{plain}
\newtheorem{theorem}{Theorem}[section]
\newtheorem{lemma}[theorem]{Lemma}
\newtheorem{corollary}[theorem]{Corollary}
\theoremstyle{definition} % Use normal style for definitions
\newtheorem{definition}[theorem]{Definition}
\newtheorem{example}{Example}
  \definecolor{mydarkblue}{rgb}{0,0.08,0.45}
\definecolor{codegray}{gray}{0.98}
\definecolor{codepurple}{RGB}{128,0,128}
\definecolor{codegreen}{rgb}{0,0.6,0}
\lstdefinelanguage{Python}{
    morekeywords={class, def, for, while, if, else, return, break, continue, import, from, as, try, except, finally, raise, with, lambda, in, is, and, or, not, None, True, False},
    sensitive=true,
    morecomment=[l]{\#},
    morestring=[b]",
    morestring=[b]',
}
\lstdefinestyle{mystyle}{
    commentstyle=\color{codegreen},
    keywordstyle=\color{magenta},
    numberstyle=\tiny\color{codegray},
    stringstyle=\color{codepurple},
    basicstyle=\ttfamily\footnotesize,
    breakatwhitespace=false,         
    breaklines=true,                 
    captionpos=b,                    
    keepspaces=true,                 
    % numbers=left,                    
    numbersep=5pt,                  
    showspaces=false,                
    showstringspaces=false,
    showtabs=false,                  
    tabsize=2
}
\begin{document}

\icmltitle{{A Learnability Analysis on Neuro-Symbolic Learning}}
% It is OKAY to include author information, even for blind
% submissions: the style file will automatically remove it for you
% unless you've provided the [accepted] option to the icml2024
% package.

% List of affiliations: The first argument should be a (short)
% identifier you will use later to specify author affiliations
% Academic affiliations should list Department, University, City, Region, Country
% Industry affiliations should list Company, City, Region, Country

% You can specify symbols, otherwise they are numbered in order.
% Ideally, you should not use this facility. Affiliations will be numbered
% in order of appearance and this is the preferred way.
\icmlsetsymbol{equal}{*}

\begin{center}
  \textbf{Hao-Yuan He}, \textbf{Ming Li}\\
  \vspace{3mm}
  National Key Laboratory for Novel Software Technology, Nanjing University, China\\
  School of Artificial Intelligence, Nanjing University, China\\
  \{hehy,lim\}@lamda.nju.edu.cn
\end{center}

% \icmlaffiliation{nke}{National Key Laboratory for Novel Software Technology, Nanjing University, China}
% \icmlaffiliation{ai}{School of Artificial Intelligence, Nanjing University, China}

\icmlcorrespondingauthor{Ming Li}{lim@lamda.nju.edu.cn}
% You may provide any keywords that you
% find helpful for describing your paper; these are used to populate
% the "keywords" metadata in the PDF but will not be shown in the document
\icmlkeywords{Machine Learning, ICML}

\vskip 0.3in
% ]

\printAffiliationsAndNotice{}  % leave blank if no need to mention equal contribution
\hyphenation{minimal Abductive neural satisfied analyses supervision machine learning Neuro-Symbolic pseudo-label end-to-end}

%%%%%%%%%%%% JMLR Template meta info %%%%%%%%%%%%

% \jmlrheading{23}{2024}{1-\pageref{LastPage}}{1/21; Revised 5/22}{9/22}{21-0000}{Hao-Yuan He}

% Short headings should be running head and authors last names

% \ShortHeadings{Manuscript}{He}
% \firstpageno{1}
% \editor{My editor}
% \begin{keywords}
%   Neuro-Symbolic Learning,
%   Artificial Intelliegence
% \end{keywords}
\begin{abstract}
    This paper analyzes the learnability of neuro-symbolic~(NeSy) tasks in hybrid systems, such as probabilistic NeSy methods and abductive learning. We demonstrate that the learnability of NeSy tasks can be characterized by their derived constraint satisfaction problems (DCSPs). Specifically, a task is \emph{learnable} if the corresponding DCSP has a unique solution; otherwise, it is \emph{unlearnable}. For learnable tasks, we derive the corresponding sample complexity. For general tasks, we establish the asymptotic error and demonstrate that the expected error scales with the disagreement among solutions. Our results offer a principled approach to determining learnability and provide insights for new algorithms.
\end{abstract}

\section{Introduction}\label{sec:intro}
Neuro-symbolic learning~(NeSy) aims to integrate data-driven learning with knowledge-driven reasoning into a unified framework~\citep{nesySurvey:2022,nesySurvey:aij:2024}.
Current state-of-the-art NeSy methods primarily adopt a hybrid approach, combining learning and reasoning systems, such as ABL~\citep{zhou2019abductive,dai_abl_2019} and DeepProbLog~\citep{manhaeve_deepproblog_2018, deepproblog_aij}.

An illustrative prototype is shown in \cref{fig: overview of NeSy}.
Initially, the system employs a learning model, to map input queries $\x$ to corresponding concepts $\hat\z$. 
Then, a symbolic system~(\KB{}), such as a first-order logic solver, processes $\hat\z$ to deduce the final answer $\hat y$. The system evaluates the predicted concepts \(\hat\z\), for instance, by verifying whether \(\hat{\z} \land \KB \models y\). Feedback from \KB{} is provided to the learning model in various forms, such as pseudo-labels in ABL or weighted model counting in DeepProbLog, to facilitate further improvements. This prototype can be widely applied in various domains, such as puzzle solving, code generation, and self-driving path planning~\citep{jiao2024text2sql_nesy,Li2024NeSyLC,hu2025ABLRefl}.

The hybrid system~(cf. \cref{fig: overview of NeSy}) enables end-to-end training in a weakly supervised manner using only $(\x, y)$ pairs, i.e., training the model \(f: \X\to \Z\) relying solely on the raw data and the final answers.
Since the concepts \(\z\) are not given, NeSy methods aim to minimize the discrepancy between the model's prediction and the knowledge base:
\begin{equation}
  \Nesyrisk(f) = \operatorname{\Exp}\limits_{(\x,y)}\left[ \Ind\left(f(\x) \land \KB \not\models y\right)\right]. \label{eq: nesyrisk intro}
\end{equation}

Note that the goal is to learn the model \(f\) to generalize well to unseen data, which requires a low concept error. 
The concept risk is defined using the labeling function \(g\):
\begin{equation}
  R_{0/1}(f) = \operatorname{\Exp}\limits_{x}\left[ \Ind\left(f(x) \neq g(x)\right)\right]. \label{eq: concept risk intro}
\end{equation}

\begin{figure}[!tb]
  \centering
  \includegraphics[width=.45\linewidth]{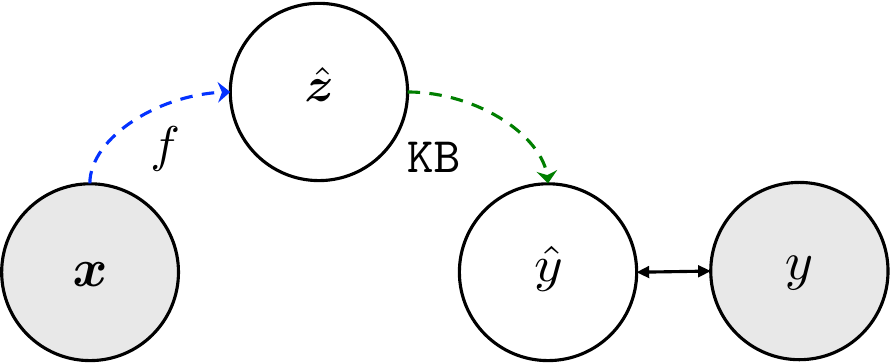}
  \caption{A typical inference process of \emph{hybrid neuro-symbolic system}.
    Shadowed circles denote observed variables, \(\x\) is raw input data, \(\hat\z\) is intermediate concepts, \(\hat y\) is the final answer inferred by \KB{}, and \(y\) denotes the true final answer.
    The goal is to learn the model \(f\).
  }
  \label{fig: overview of NeSy}
\end{figure}

In this context, we are concerned with the question of \emph{learnability}.
That is, under what conditions can the concept risk~\cref{eq: concept risk intro} be minimized via empirical risk minimization over the NeSy risk~\cref{eq: nesyrisk intro}, given a finite sample set, as it approaches infinity?
In the learnable case, minimizing \cref{eq: concept risk intro} is achievable by learning from a finite-sized dataset.
However, in the unlearnable case, it is not possible to minimize \cref{eq: concept risk intro} through learning from the dataset.

Consider an unlearnable example: $\mathtt{XOR(\IMGI,\IMGO) = 1}$, where the background knowledge specifies a logical exclusive-or digit equation.
The two hypotheses, $(\IMGO \mapsto \mathtt{0}, \IMGI \mapsto \mathtt{1})$ and $(\IMGO \mapsto \mathtt{1}, \IMGI \mapsto \mathtt{0})$, both satisfy the background knowledge. 
As a result, the learning model may fail to identify the correct intermediate concept, since either hypothesis satisfies the background knowledge. For this task, even an infinite amount of training data does not guarantee a model that minimizes \cref{eq: concept risk intro}.
Therefore, \emph{it is crucial to characterize which types of NeSy tasks are (un)learnable.}

In this paper, we characterize the learnability of NeSy tasks within the probably approximately correct~(PAC) framework~\citep{valiant84Learnable}.
The key to exploring the learnability of a NeSy task is to construct it as a \emph{derived constraint satisfaction problem}~(DCSP).
Below, we present an informal version of our main result.

\begin{theorem}[Informal]\label{thm: main theorem}
  For a neuro-symbolic task \(\mathcal{T}\) with a proper hypothesis space, the learnability is determined by the conditions:
  \begin{itemize}[itemsep=0.2ex,topsep=0.1ex,leftmargin=1.2em]
    \item
      If the derived constraint satisfaction problem has a unique solution, the task is \emph{learnable}.
      Specifically, the concept error is bounded by \(\epsilon\), given that the sample size \(N\) satisfies the following condition:
      \begin{equation*}
        N > \frac{1}{\kappa} \cdot \log\left({|\B|}/{\epsilon}\right)\,,
      \end{equation*}
      where \(|\B|\) denotes a task-specific constant, and \(\kappa\) is a positive constant determined by the data distribution.
    \item
      Otherwise, the task is \emph{unlearnable}.
  \end{itemize}
\end{theorem}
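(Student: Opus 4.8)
The plan is to transfer the question from the model space to a finite combinatorial object, express both the NeSy risk and the concept risk through that object, and then add a concentration argument. Since the hypothesis space is \emph{proper}, every model \(f\) is, up to its behaviour on a probability-zero set of inputs, determined by the concept it assigns to each ``perceptual class'' of inputs; this induces an abstraction \(\bar f\) lying in the finite set \(\B\) over which the DCSP is posed. The first thing to prove is the dictionary: \(\Nesyrisk(f)=0\) if and only if \(\bar f\) satisfies every constraint of the DCSP, and, crucially, the ground-truth labeling \(g\) always yields a solution because \(g(\x)\land\KB\models y\) holds for every \((\x,y)\) in the task. After this, ``learnability'' becomes the question of whether the data can single \(g\) out among the DCSP solutions, and the proof splits on the number of solutions.

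\emph{Learnable case (unique solution).} Here the unique solution is necessarily (the abstraction of) \(g\), so every \(h\in\B\) with \(h\neq\bar g\) violates at least one constraint; by properness and the fact that the constraints are those witnessed by the support of the marginal on \(\x\), the event \(E_h=\{(\x,y):h(\x)\land\KB\not\models y\}\) has positive probability. Set \(\kappa=\min_{h\in\B,\,h\neq\bar g}\Pr[E_h]>0\); this is the distribution-dependent constant in the statement. For a sample of size \(N\), the probability that some fixed bad \(h\) escapes detection (no drawn pair lands in \(E_h\)) is at most \((1-\kappa)^N\le e^{-\kappa N}\), so a union bound over the at most \(|\B|\) bad abstractions gives that, with probability at least \(1-|\B|e^{-\kappa N}\), every bad abstraction has strictly positive empirical NeSy risk while \(g\) has empirical NeSy risk \(0\). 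Any empirical risk minimizer over \(\empNesyrisk\) therefore has abstraction \(\bar g\), hence concept risk \(0\). Forcing \(|\B|e^{-\kappa N}<\epsilon\) yields \(N>\kappa^{-1}\log(|\B|/\epsilon)\), which is the claimed bound; in the exact-ERM realizable setting one in fact gets \(R_{0/1}=0\), and \(\epsilon\) plays the role of the tolerated failure probability, a sharper statement carrying accuracy and confidence separately.

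\emph{Unlearnable case (more than one solution).} Pick a solution \(g'\neq g\). By properness the hypothesis space contains models realizing \(g\) and \(g'\), and both have \(\Nesyrisk=0\) by the dictionary, hence almost surely \(\empNesyrisk=0\) for every sample size. An empirical risk minimizer is thus free to return the model realizing \(g'\), whose concept risk is \(\Pr_{\x}[g'(\x)\neq g(\x)]>0\), a fixed quantity that no choice of \(N\) can drive below a small \(\epsilon\); hence the task is not PAC-learnable. The same construction shows, more quantitatively, that the attainable concept error is at least the minimal pairwise disagreement among DCSP solutions, the announced ``error scales with solution disagreement'' phenomenon.

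\emph{Main obstacle.} The delicate part is the reduction underlying the first step: pinning down ``proper hypothesis space'' so that (i) a model is effectively an element of a \emph{finite} set \(\B\); (ii) \(\Nesyrisk(f)=0\) is genuinely equivalent to being a DCSP solution, including the direction that a model may be altered on a measure-zero set of atypical inputs without changing either risk; and (iii) \(\kappa>0\), which requires ruling out a bad abstraction that differs from \(g\) only off the support of the input marginal — exactly what ties the combinatorial notion of ``unique solution'' to the probabilistic setting. Once the reduction and the positivity of \(\kappa\) are in place, the concentration step is routine.
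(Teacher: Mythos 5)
Your proof is correct, and the unlearnable direction is essentially the paper's own argument (two DCSP solutions both drive \(\Nesyrisk\) to zero, ERM cannot separate them, and at least one has concept risk bounded away from zero). The learnable direction, however, takes a genuinely different route. The paper's proof (Lemma~A.1 in \cref{appendix: proof of learnable}) is a \emph{coverage} argument: \(\kappa\) is the minimum sampling probability of a concept sequence in \(\B=\bigcup_{y}A(y)\) (Assumption \ref{assumption: non-zero prob}), the bad event \(Q\) is that some admissible concept sequence never appears in the sample, and \(\Pr[Q]\le|\B|(1-\kappa)^N\); once every sequence has appeared, the empirical DCSP coincides with the population one, whose unique solution is the labeling \(g\). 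You instead run the standard realizable finite-class PAC argument: a union bound over the incorrect candidate \emph{assignments}, with \(\kappa\) the minimum probability that a wrong assignment is caught violating a constraint. The two bounds have the same exponential form but genuinely different constants — note in particular that you overload \(\B\) to mean the set of candidate assignments (roughly \(|\Z|^{L}\) of them), whereas the paper's \(\B\) is the set of admissible concept sequences; both readings satisfy the informal statement, but they are not the same quantity. What each buys: the paper's constants are pinned to explicitly stated, checkable assumptions and the coverage event ties directly to the DCSP construction (the constraints witnessed by the sample); your version cleanly separates accuracy from confidence (on the good event the concept risk is exactly zero) and makes visible exactly where uniqueness of the DCSP solution \emph{relative to the support of the input distribution} is needed to get \(\kappa>0\) — a point you rightly flag as the delicate step, and which the paper handles implicitly through its non-vanishing-probability assumption.
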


A formal version of this result is provided in \cref{thm: learnable}.
Using the above theorem, we can formulate the NeSy task as a constraint satisfaction problem and leverage modern solvers~\citep{Prud2022Choco} to determine whether the task is (un)learnable.
Furthermore, once the solution space of the DCSP is obtained, we can calculate the disagreement \(d\) among the solutions~(cf. \cref{sec: conditions of learnability}).
For general NeSy tasks, in the asymptotic regime of infinitely large sample sizes, the expected error from empirical risk minimization is bounded by \(d/L\), where \(L\) denotes the size of the concept space.

% This sufficient and necessary condition for the learnability of NeSy tasks sheds light on this field, may inspire new algorithm designs, and helps explain the underlying cause of the “reasoning shortcut” phenomenon~\citep{marconato23ReasoningShortcuts,bortolotti2024RSbench,yang24abl}.

The remainder of this paper is structured as follows: 

In \cref{sec:previous nesy}, we review the background and principled methods of neuro-symbolic learning. Next, in \cref{sec:ambiguity of nesy}, we introduce the learnability analysis, including formal definitions, conditions for learnability, and an asymptotic error analysis. Additionally, we discuss how previously unlearnable tasks can become learnable under the DCSP framework. Furthermore, in \cref{sec:experiments}, we validate our theoretical findings using experimental results. Finally, \cref{sec: limitation} and \cref{sec: conclusion} provide a discussion on future directions and the conclusion of the paper.

\section{Related Works}\label{sec:related_works}

The combination of learning and reasoning remains the holy grail problem of AI for decades now~\citep{kbANN94,sun1994integrating,nsfa2002}.
One promising approach is to directly incorporate logical constraints into the loss function as the optimization objective~\citep{xu_semantic_2018,roychowdhury_regularizing_2021,he2024RILL}.
However, since the logical constraint is discrete, the optimization must be projected into the continuous space.
This requires an approximation of logical reasoning.
Such an approach can lead to issues when approximating discrete logical computations~\citep{van_krieken_survey_2022}.
A more effective approach is the hybrid system, where both the learning and reasoning models function at their full capacity.
For instance, DeepProbLog~\citep{manhaeve_deepproblog_2018,deepproblog_aij}, NeurASP~\citep{neurasp} and Scallop~\citep{Li2023Scallop} employ probabilistic logic programming as their reasoning model.
ABL~\citep{zhou2019abductive,dai_abl_2019} employs abductive reasoning for logical inference.
Recently, there have been some studies on NeSy with auto-regressive or temporal models~\citep{manginas2025nesya,desmet2025relationalneurosymbolicmarkovmodels}.
Building on the hybrid approach, there have been many successful applications~\citep{Mao2019NS-CL, wang2021abl, cai2021abl, verreet2023pu_nesy,gao2024abl, jiao2024text2sql_nesy}.
Since the hybrid approach has shown its superiority, it is worthwhile to establish a theoretical framework for analyzing its learnability.

Recently, the ``reasoning shortcut'' problem~\citep{marconato23ReasoningShortcuts,Marconato2023Cool, li2023learning_wo_shortcut, marconato24bears, bortolotti2024RSbench} has been observed, referring to the mismatch phenomenon between NeSy risk and concept risk. 
The underlying reason for this issue could be that the specific NeSy task is inherently unlearnable. 
The analysis of learnability in this paper may help in understanding this phenomenon and for effective algorithm design.

% \paragraph{Theoretical Analysis.}
There are also several studies that aim to provide theoretical insights into NeSy methods. 
For instance, \citet{wang_miws_2023} propose a weakly supervised learning framework, termed multi-instance partial label learning (MI-PLL), to study NeSy learning. 
However, their approach relies on the condition that, for any \(z \in \Z\), there exists a unique \(y\) such that \(\sigma(z, z, \dots) = y\), which assumes \([z, z, \dots]\) must constitute a valid input to the symbolic system, which often does not hold in practice. 
Besides, \citet{tao2024abl} examine scenarios where randomly selecting abduction candidates results in a consistent optimization objective within the ABL framework. 
Additionally, \citet{yang24abl} introduce a shortcut risk metric to quantify the gap between true risk and surrogate risk. 
They establish an upper bound on shortcut risk based on the complexity of the knowledge base. 
However, their findings do not offer a comprehensive framework for evaluating the learnability of NeSy tasks. 
Despite prior efforts, a substantial research gap persists in understanding the learnability of NeSy tasks.

\section{Preliminaries}\label{sec:previous nesy}
Let lowercase letters, e.g., \(x, z\), to denote instances, uppercase letters, e.g., \(X, Z\), to denote random variables, and boldface letters, e.g., \(\x, \z\), to denote vectors or sequences.
Let \(P(\cdot)\) represents a distribution, while \(p(\cdot)\) or \(\Pr[\cdot]\) denotes probability, and \([m]\) stands for the set \(\{1, \ldots, m\}\).

In this section, we first set up the problem.
Then we provide a brief review of the principal methods in this field.

\subsection{Problem Setup}
A typical hybrid neuro-symbolic system consists of two components: a \emph{machine learning} model (e.g., a neural network) and a \emph{logical reasoning} model (e.g., a first-order logic solver).
The learning model \(f: \X \rightarrow \Z\) maps an instance \(x\) (e.g., image, text, or audio) from the input space \(\X\) to an intermediate concept \(z\) (e.g., primitive facts or predicates) within the symbol space \(\Z\), where \(|\Z| = L\).
The reasoning model \(\KB\) consists of rules over the concept space and can be implemented using any logic-based system, such as ProbLog~\citep{prolog2007} or answer set programming~(\citealt{ASP}).
Assume that a labeling function \(g: \X \rightarrow \Z\) exists such that \(z = g(x)\).
The learning model is parameterized by \(\theta\), and \(p_\theta(\cdot)\) represents the likelihood estimated by the model, where \(f(x) = \arg\max_{z \in \Z} p_\theta(z \mid x)\).

During the inference process~(cf. \cref{fig: overview of NeSy}), the learning model \(f\) accepts multiple instances as a sequence \(\x = (x_1, \ldots, x_m)\) and outputs a sequence of concepts \(\hat{\z} = (\hat{z}_1, \ldots, \hat{z}_m)\).
The output \(\hat\z\) is then passed to the reasoning model \KB{}, which infers the final label \(y \in \Y\) through logical entailment, i.e., \(\hat{\z} \land \KB \models y\).
To simplify the inference process of \(\KB\), we represent it by a logical forward operator \(\sigma(\cdot)\) such that \(\sigma(\hat{\z}) = y\).
In a standard neuro-symbolic learning setup~\citep{dai_abl_2019,deepproblog_aij,Li2023Scallop, nesySurvey:aij:2024}, the training data \(\{(\x_i, y_i)\}_{i=1}^N\) are sampled from data distribution \(\mathcal{D} = (\X^m, \Y)\).
Therefore, a neuro-symbolic task can be formally defined as a triple \(\nesytask = \left<\X, \Y, \KB \right>\).
\begin{example}[Addition]\label{example:sum2}
  The input $\x \in \X^2$, where $\X$ denotes digit images.
  The concepts \(\Z\) consist of digits from \(\mathtt{0}\) to \(\mathtt{9}\).
  The data takes the form \((\IMGO, \IMGI) \mapsto \mathtt{1}\),
  with the logical forward function \(\sigma(\cdot) := \mathtt{SUM}(\cdot, \cdot)\)\,.
  The label space \(\Y\) is defined as the addition results, i.e., from \(\mathtt{0}\) to \(\mathtt{18}\).
\end{example}
Note that when all final labels \(y\) are identical (e.g., \(\z \land \KB \models \top\)), as in the case of code generation where all code must satisfy a syntax constraint~\citep{jiao2024text2sql_nesy}, the final label \(y\) can be omitted for simplicity. The analysis presented in this paper can be easily adapted to such cases.

The success of NeSy systems highly depends on the recognition of intermediate concepts. To evaluate the concept-level performance, we define the concept risk as follows:

\begin{definition}%[Concept Risk]
  The concept risk is defined as follows:
  \begin{equation}\label{eq:concept risk}
    R_{0/1}(f;g) = \operatorname{\Exp}\limits_{x}\left[ \Ind\left(f(x) \neq g(x)\right)\right].
  \end{equation}
\end{definition}
For simplicity, we omit \(g\) and denote \cref{eq:concept risk} as \(R_{0/1}(f)\).
However, optimizing \cref{eq:concept risk} is challenging due to the lack of supervision regarding the intermediate concept.

% The goal of NeSy is to identify the optimal learning model for concept recognition, making it essential to evaluate its performance at the concept level.
\subsection{Neuro-Symbolic Methods}
In order to minimize the concept risk~\cref{eq:concept risk}, the key idea of current NeSy methods~\citep{manhaeve_deepproblog_2018,dai_abl_2019} is to optimize the neuro-symbolic risk as surrogate, which aims to minimize the discrepancy between the learning and reasoning models.

\begin{definition}%[Neuro-Symbolic Risk]
  The neuro-symbolic risk is formally defined as follows:
  \begin{equation}
    \Nesyrisk(f) = \operatorname{\Exp}\limits_{(\x,y)}\left[ \Ind\left(f(\x) \land \KB \not\models y\right)\right]. \label{eq: nesy risk}
  \end{equation}
\end{definition}

The optimization process is to select the optimal function \(f^* \in \mathcal{F}\) that minimizes the NeSy risk; that is:
\begin{equation}
  f^* = \arg\min_{f\in \F}\,\Nesyrisk(f).
\end{equation}

% In this paper, we present the following mild assumptions, which are implicitly included in previous research~\citep{wang_miws_2023,marconato23ReasoningShortcuts,tao2024abl}.

% \begin{assumption}\label{assumption: 1}
%   There exists a labeling function \(g \in \F\) that be able to minimize both \Nesyrisk{} and \truerisk{}.
% \end{assumption}

% \begin{assumption}\label{assumption: 2}
%   The sampling process is uniform over the intermediate concepts.
% \end{assumption}

% \Cref{assumption: 1} is natural, that is, the hypothesis space is enough for the NeSy task \nesytask.
% \Cref{assumption: 2} says that,

% \begin{figure}[!tb]
%   \centering
%   \includegraphics[width=.6\linewidth]{}
%   \vspace{0.1in}
%   \caption{\emph{Data generation process}. The shadowed variables represent observed data, while the unshadowed variables denote unobserved data.
%     \(X^m\) is jointly determined by the noise factor \(\epsilon\) and the concepts \(Z^m\)\,.
%     \(Y\) is inferred by \(\KB\), i.e., \(\sigma(Z^m) = Y\)\,.
%   The goal of the NeSy task is to learn the model \(f\), i.e., the blue curve.}
%   \vspace{-0.15in}
% \end{figure}

% In order to minimize the NeSy risk, prior research has investigated various approaches.
% From a broad perspective, two principal approaches are: probabilistic neuro-symbolic learning and abductive learning.

\paragraph{Probabilistic Neuro-Symbolic Learning.}
Probabilistic neuro-symbolic learning~(PNL, \citealt{Manhaeve21PNL}) methods adopt reasoning models via probabilistic logic programming, such as DeepProblog~\citep{manhaeve_deepproblog_2018,deepproblog_aij}, NeurASP~\citep{neurasp}, and Scallop~\citep{Li2023Scallop}.
Since the NeSy risk~\cref{eq: nesy risk} is non-continuous, PNL aims to minimize the following objective:
\begin{equation}\label{eq:pnl_fml_primitive_obj}
  -\Exp_{(\x,y)}\Pr\left[y\mid \x; f,\KB\right].
\end{equation}

Reformulating \cref{eq:pnl_fml_primitive_obj}, we can express the objective as follows:
% \begin{equation}
%   \Exp_{(\x,y)} \sum_{\z} \Ind(\z \land \KB \models y) \cdot \Pr\left[\z \mid \x; f, \KB\right]. \label{eq:pnl_fml_problem_primitive_obj1}
% \end{equation}
% In practice, \cref{eq:pnl_fml_problem_primitive_obj1} is rewritten as a negative log-likelihood:
\begin{equation}
  -\Exp_{(\x,y)}
  \log \sum_{\z} \Ind(\z \land \KB \models y) \cdot \Pr\left[\z \mid \x; f, \KB\right]. \label{eq:pnl_obj}
\end{equation}
The above objective is referred to as the probabilistic neuro-symbolic learning risk, denoted as \(\PNLrisk(f)\).

The key operation for calculating the PNL risk is \( \sum_{\z}\Ind(\z\land \KB \models y) \cdot \Pr\left[\z \mid \x; f, \KB\right]\), also well-known as \emph{weighted model counting}~(WMC), which requires enumerating all possible worlds that satisfy the constraints of the symbolic system.
This operation can be performed using various approaches, such as ProbLog~\citep{prolog2007}, answer set programming~\cite{ASP} and so on.
However, in general, the computational complexity of WMC is \#P~\citep{hardnessNesy}, which makes PNL methods challenging to scale.

\paragraph{Abductive Learning.}
Unlike PNL methods, abductive learning methods~\citep{dai_abl_2019,huang2021ablsim,hu2025ABLRefl} infer the \emph{most} plausible concepts through abductive reasoning and use them to update the model.
The objective of ABL is to minimize the risk:
\begin{equation}\label{eq:abl_risk}
  \ABLrisk(f) = -\Exp_{(\x,y)} \log \left(\Pr\left[ y, \bar{\z}\mid \x;f,\KB\right]\right),
\end{equation}
where \( \bar{\z} = \min_{\z \in A(y)} \Score(\z, f(\x))\) represents the most likely candidate in the abduction set.
The abduction set \(A(y)\) includes all possible concepts \(\z\) that satisfy the constraints of \KB{} and have a non-zero measure, i.e., \(\Pr[\z] > 0\).
The score function measures the alignment between a candidate \(\z\) and the model's prediction \(f(\x)\). For instance, \citet{dai_abl_2019} use the Hamming distance~\citep{hamming_distance} as the score function.

ABL enhances computational efficiency by concentrating on the most plausible candidates, thereby avoiding the enumeration of all possible worlds.
However, the inherent ambiguity in the abduction process can lead to incorrect candidate selection~\citep{Magnani09Abductive}, introducing bias into the learning process~\citep{a3bl}.

\paragraph{Unified View.}
Here, we provide a unified view that: 
Both PNL and ABL methods can effectively optimize~\cref{eq: nesy risk}.
Formally, we state the following theorem, with the proof provided in \cref{appendix: proof of surrogate}.

\begin{restatable}{theorem}{thmsurrogate}\label{thm: surrogate}
  A minimizer of \PNLrisk{} or \ABLrisk{} is also a minimizer of \Nesyrisk{}. For each surrogate \(R_s \in \{\PNLrisk{}, \ABLrisk{}\}\), we have:
  \[
    \arg\min_{f\in\F} R_s(f) \subseteq \arg\min_{f\in \F} \Nesyrisk(f)\,.
  \]
\end{restatable}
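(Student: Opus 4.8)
The plan is to reduce both inclusions to two facts: under the problem setup all three risks share the same minimal value $0$, and a surrogate risk of $0$ already forces the NeSy risk to be $0$. For the first fact, recall that the data is generated by $\x\mapsto\z=g(\x)\mapsto y=\sigma(\z)$, so every sampled pair satisfies $g(\x)\land\KB\models y$; taking the model in $\F$ that assigns conditional mass $1$ to $g(x_j)$ at each coordinate gives $\Pr[y\mid\x;f,\KB]=1$ and $\bar\z=g(\x)$ with $\Pr[y,\bar\z\mid\x;f,\KB]=1$ on every data point. Hence this model makes $\PNLrisk$, $\ABLrisk$ and $\Nesyrisk$ all vanish, so $\min_\F\PNLrisk=\min_\F\ABLrisk=\min_\F\Nesyrisk=0$. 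Since each of these risks is nonnegative, $\arg\min_\F R=\{f\in\F:R(f)=0\}$ for $R\in\{\PNLrisk,\ABLrisk,\Nesyrisk\}$, and it suffices to prove $R_s(f)=0\Rightarrow\Nesyrisk(f)=0$ for $R_s\in\{\PNLrisk,\ABLrisk\}$.

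For the PNL surrogate, $0\le\Pr[y\mid\x;f,\KB]\le1$ gives $-\log\Pr[y\mid\x;f,\KB]\ge0$ pointwise, so $\PNLrisk(f)=0$ iff $\Pr[y\mid\x;f,\KB]=\sum_{\z}\Ind(\z\land\KB\models y)\Pr[\z\mid\x;f,\KB]=1$ for $\mathcal{D}$-almost every $(\x,y)$. Fix such an $(\x,y)$ and let $\z^\star=f(\x)$ be the decoded concept. Using the standard per-instance factorization $\Pr[\z\mid\x;f,\KB]=\prod_j p_\theta(z_j\mid x_j)$ and the pigeonhole bound $\max_z p_\theta(z\mid x_j)\ge 1/L$, we get $\Pr[\z^\star\mid\x;f,\KB]\ge L^{-m}>0$. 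If $\z^\star\land\KB\not\models y$, then $\z^\star$ is dropped from the sum and $\Pr[y\mid\x;f,\KB]\le 1-\Pr[\z^\star\mid\x;f,\KB]<1$, a contradiction; hence $f(\x)\land\KB\models y$ a.e., i.e.\ $\Nesyrisk(f)=0$. For the ABL surrogate, $\ABLrisk(f)=0$ forces $\Pr[y,\bar\z\mid\x;f,\KB]=1$ a.e.; since this equals the model likelihood $\prod_j p_\theta(\bar z_j\mid x_j)$ on the particular sequence $\bar\z$ (recall $\bar\z\in A(y)$, so $\sigma(\bar\z)=y$ automatically), it forces $p_\theta(\bar z_j\mid x_j)=1$ for every $j$, hence the decoded concept is $f(\x)=\bar\z$; as $\bar\z\in A(y)$ this again yields $f(\x)\land\KB\models y$ a.e., i.e.\ $\Nesyrisk(f)=0$.

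Combining, for $R_s\in\{\PNLrisk,\ABLrisk\}$ we obtain $\arg\min_\F R_s=\{f:R_s(f)=0\}\subseteq\{f:\Nesyrisk(f)=0\}=\arg\min_\F\Nesyrisk$, which is exactly the claim. The main obstacle is the pointwise step in the PNL case: one must exclude the degenerate possibility that the decoded concept $f(\x)$ is inconsistent while the total mass on consistent concepts is nonetheless $1$, and this is precisely where the strictly positive lower bound $\Pr[f(\x)\mid\x;f,\KB]\ge L^{-m}$, together with the per-instance factorization of the model, is indispensable. A secondary point needing care is the claim $\min_\F R_s=0$, which rests on the realizability and consistency assumptions packaged into the ``proper hypothesis space'' setup; if one does not wish to assume attainment, the inclusion still holds — vacuously — whenever $\arg\min_\F R_s=\emptyset$.
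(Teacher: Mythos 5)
Your proof is correct and follows essentially the same route as the paper's: both arguments establish that the surrogate minima equal $0$ via realizability (the labeling function $g$ attains zero risk) and then show that zero surrogate risk forces all predictive mass onto $\KB$-consistent concept sequences, so the decoded prediction $f(\x)$ satisfies the knowledge base and $\Nesyrisk(f)=0$. Your pigeonhole bound $\Pr[f(\x)\mid\x]\geq L^{-m}>0$ actually makes explicit a step the paper leaves implicit (that the argmax decode carries positive mass and hence must lie in the consistent set), which is a welcome tightening rather than a deviation.
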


\section{Learnability Analysis}\label{sec:ambiguity of nesy}

Recall that the learnability discussed here refers to whether the concept risk~\cref{eq: concept risk intro} can be minimized via empirical risk minimization~(ERM) over the NeSy risk~\cref{eq: nesyrisk intro}, given a finite sample set as it approaches infinity. While this may be possible in some cases, it is not guaranteed in others. To understand why this occurs, it is crucial to establish a learnability analysis to determine: \emph{which types of NeSy tasks are learnable?}
Similar to standard probably approximately correct (PAC) learning~\citep{valiant84Learnable}, we define the learnability of a NeSy task as follows:
\begin{definition}
  Let \( N\) denote the size of samples drawn i.i.d. from \(\mathcal{D}\),\, \(\mathcal{T}\) represent a NeSy task, and \( \F \) denote the hypothesis space.
  We say that \( \mathcal{T} \) is \emph{learnable} if: for any \( 0 <\epsilon,\, \delta < 1\,\) and distribution \(\mathcal{D}\), there exists an algorithm \( \mathcal{A} \) and an integer \( N_{\epsilon, \delta} \) such that, whenever \( N \geq N_{\epsilon, \delta} \), the selected hypothesis \( \hat{f} \) satisfies
  \(\Pr[\truerisk(\hat{f}) \leq \epsilon] \geq  1 - \delta.\)
  Otherwise, we say that it is \emph{unlearnable}.
\end{definition}

We focus on the ERM algorithm to be \(\mathcal{A}\) in our analysis, as it has been shown that in common learning settings, such as supervised classification and regression, a problem is learnable if and only if it is learnable by ERM~\citep{Blumer89Learnability, Alon97Learnability}.

\subsection{Restricted Hypothesis Space}
Unlike supervised learning, where the goal is to learn a hypothesis mapping \(x \mapsto y\) from the \((x, y)\) pair, the NeSy task aims to learn a mapping \(\x \mapsto \z\) from the \((\x, y)\) pair.
Ideally, each \(y\) corresponds to a unique \(\z\), ensuring that \(\forall y \in \Y, |A(y)| = 1\). Under this condition, the task simplifies to a standard learning problem~\citep{vapnik1999overview}.
However, this condition is rarely satisfied in practical applications, making NeSy tasks inherently difficult to learn.
When multiple plausible solutions \(\z_1, \dots, \z_k\) exist, the learning task becomes more complex owing to inherent ambiguity. We refer to the task as \emph{ambiguous} if there exists some \(y \in \Y\) such that \(|A(y)| \geq 2\).
\begin{restatable}{proposition}{propCoA}\label{thm:curse of ambiguity}
  For an ambiguous NeSy task \nesytask{}, if the hypothesis space \(\F\) is sufficiently complex (e.g., capable of shattering the task), there exists \(f^*\) that minimizes \Nesyrisk{} but does not minimize the \truerisk{}.
\end{restatable}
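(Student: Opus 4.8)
The plan is to construct, for an ambiguous task, a hypothesis $f^*$ that achieves the minimal possible \Nesyrisk{} — namely zero — yet has $\truerisk(f^*) > 0$, which suffices since the labeling function $g$ itself achieves zero \Nesyrisk{} (because $g(\x) \land \KB \models y$ by definition of $g$ and $\sigma$), so the minimum of \Nesyrisk{} over a sufficiently rich $\F$ is exactly $0$. First I would invoke the ambiguity hypothesis: there is some $y_0 \in \Y$ with $|A(y_0)| \geq 2$, so we may pick two distinct concept sequences $\z, \z' \in A(y_0)$ with $\z \neq \z'$ and both satisfying $\Pr[\z], \Pr[\z'] > 0$. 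Since $\z \neq \z'$, they differ in at least one coordinate, say position $j$, so $z_j \neq z_j'$ as elements of $\Z$.

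Next I would define $f^*$ by "swapping" the behaviour of $g$ on the fibre of inputs associated with $y_0$. Concretely, on every input sequence $\x$ whose true label is $y_0$ and whose true concept vector $g(\x)$ equals $\z$, redefine $f^*(\x) := \z'$; elsewhere let $f^* := g$. (If one wants a single global map on $\X$ rather than on $\X^m$, the same idea works coordinate-wise: choose the swap so that $f^*$ still lands in $A(y_0)$ whenever $g$ does.) Because $\z' \in A(y_0)$, we have $f^*(\x) \land \KB \models y_0$ on exactly those swapped inputs, and $f^* = g$ elsewhere, so $f^*(\x) \land \KB \models y$ holds for every $(\x, y)$ in the support of $\mathcal{D}$; hence $\Nesyrisk(f^*) = 0 = \min_{f \in \F} \Nesyrisk(f)$, using that $\F$ is rich enough to contain such an $f^*$ (this is where "capable of shattering the task" is used — it guarantees $f^*$ lies in the hypothesis class). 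At the same time, $f^*$ disagrees with $g$ on the swapped set, so $\truerisk(f^*) = \Pr_{\x}[f^*(\x) \neq g(\x)] \geq \Pr_{\x}[g(\x) = \z \text{ and } \text{label is } y_0] > 0$, provided that event has positive probability.

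The main obstacle is precisely ensuring that the swapped set has strictly positive measure under the marginal on $\X^m$: the conditions $\Pr[\z] > 0$ and $\Pr[\z'] > 0$ are stated at the level of the concept distribution induced by $\mathcal{D}$, and I would need to make explicit the (mild, and presumably standing) assumption that positive concept probability corresponds to a positive-measure set of raw inputs realizing that concept — i.e. the pushforward of the input marginal under $g$ assigns positive mass to $\z$. Granting that, the construction is complete. A secondary subtlety is the distinction between a map $\X \to \Z$ and a map $\X^m \to \Z^m$ applied componentwise; I would handle this by doing the swap at the level of the sequence map and noting that the statement's $R_{0/1}$ is defined via $f$ and $g$ consistently, or alternatively by choosing $\z, \z'$ to differ in a single coordinate and swapping only that coordinate's labeling function, so that $f^*$ is genuinely of the form "componentwise apply a modified $g$." Finally I would remark that this $f^*$ is the formal embodiment of the XOR example in the introduction, and that the gap $\truerisk(f^*) - \min \truerisk$ is governed by the disagreement among DCSP solutions, foreshadowing the quantitative bound $d/L$.
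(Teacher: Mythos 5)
Your proposal is correct and takes essentially the same approach as the paper: both exploit a label \(y_0\) with \(|A(y_0)|\geq 2\) to swap one valid concept sequence for another on the corresponding inputs, preserving minimal \Nesyrisk{} while forcing a nonzero \truerisk{} (the paper phrases this as two hypotheses \(f_1, f_2\) agreeing everywhere except at \(\x_0\), rather than comparing your \(f^*\) against \(g\) directly). The positive-measure caveat you flag is already covered by the paper's definition of the abduction set, which requires \(\Pr[\z]>0\) for every \(\z\in A(y)\).
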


The proof is provided in \cref{appendix: proof of curse of ambiguity}.
\Cref{thm:curse of ambiguity} suggests that ambiguous NeSy tasks may be unlearnable when the hypothesis space is very complex, such as nearest neighbor, whose Vapnik-Chervonenkis dimension is infinite~\citep{Karacali03NN}, or deep neural networks~\citep{bartlett2003vapnik} without any regularization terms.
This issue arises due to overfitting caused by the high memorization capacity of models~\citep{zhang2021understanding}.
Previous studies emphasize the importance of constraining the hypothesis space in NeSy tasks~\citep{yang24abl}.
For example, pre-training models or self-supervised learning methods~\citep{Sohn20Fixmatch} have been shown to promote clustering properties in neural networks, further enhancing generalization performance.

Consider a scenario where a pre-trained model satisfies a clustering property~\citep{huang2021ablsim}, meaning that instances representing the same concept are grouped together in feature space.
In the ambiguous task described in \cref{example:sum2}, if the model correctly processes a key sample such as \(\mathtt{SUM}(\IMGO,\IMGO)=\mathtt{0}\), it can reliably identify \(\mathtt{0}\).
This, in turn, simplifies subsequent tasks.
For example, once the model recognizes \(\mathtt{SUM}(\IMGO,\IMGI)=\mathtt{1}\), it can correctly identify \(\mathtt{1}\).
By iteratively applying this process, the model can learn to recognize all relevant concepts despite initial ambiguity.

The above process highlights the need to restrict the hypothesis space for the learning system. 
This hypothesis space ensures consistent mappings between concepts and labels, which can be formalized as follows:

\begin{definition}
  Let \(\F^*\) be a restricted hypothesis space which ensures that instances with the same label correspond to the same concept, and vice versa.
  Given the labeling function \(g\), formally, for any \(f \in \F^*\):
  \[
    \forall x_1, x_2 \in \X, \quad g(x_1) = g(x_2) \iff f(x_1) = f(x_2).
  \]
\end{definition}

\subsection{Derived Constraint Satisfaction Problem}
The restricted hypothesis space implicitly partitions the raw input space \(\X\) into \(L\) clusters.
Here we use \(\left<x\right>_i\) to denote the cluster \(\{x \mid x \in \X, f(x) = i\}\).
The learning process is to establish a mapping between the clusters \(\{\left<x\right>_1, \dots, \left<x\right>_{L}\}\) and \(\Z\) that minimize the NeSy risk.
This process inherently transforms the NeSy learning problem into a \emph{constraint satisfaction problem}~(CSP).
In this paper, we referred to it as derived CSP~(DCSP).

\begin{definition}
  The derived constraint satisfaction problem for a NeSy task \(\nesytask\) is defined as a triple \(\left<\mathsf{V}, \mathsf{D}, \mathsf{C}\right>\), where:
  \begin{itemize}[itemsep=0.2ex,topsep=0.1ex,leftmargin=1.2em]
    \item \(\mathsf{V} = \{V_1, \dots, V_{L}\}\) are the variables,
    \item \(\mathsf{D} = \{D_1 = \Z, \dots, D_{L} = \Z\}\) are the domains, and
    \item \(\mathsf{C} = \{C_1, \dots, C_N\}\) are the constraints.
  \end{itemize}
  Each \(V_i\) corresponds to a mapping from \(\left<x\right>_i\) to a concept label. For convenience, we slightly abuse notation by letting \(\mathsf{V}(\x)\) denote a mapping from an input sequence to the corresponding concept sequence determined by the mapping set \(\mathsf{V}\).
  Each \(C_j\) corresponds to a constraint \(\left(\x_j, y_j\right)\), e.g., \(\mathsf{V}(\x_j) \land \KB \models y_j\)\,.
  Solving the DCSP is to find a consistent assignment \(I\) that satisfies all constraints.
\end{definition}

A DCSP solution \(I\) corresponds to an assignment of values to variables, expressed as \(I = \{(V_1, v_1), \dots, (V_{L}, v_{L})\}\), where each \(v_i\) is the value assigned to the variable \(V_i\).
For simplicity, we denote the solution as \(I = (v_1, \dots, v_{L})\) by omitting the variables.
Here we only discuss the case when the DCSP has solution;
Otherwise, the learning model will inevitably conflict with the background \KB.

\begin{restatable}{assumption}{assumzero}{\normalfont (No Conflict)\;}\label{assumption: non conflict}
  The DCSP has solution.
\end{restatable}

\subsection{Conditions of Learnability}\label{sec: conditions of learnability}
In general, the solution to a DCSP may not be unique, meaning multiple distinct solutions can exist.
We represent it as a solution space \(\mathcal{S} = \{I_1, \dots, I_k\}\).
To handle the relationships between these solutions, we define an operation, \(\textrm{Union}()\), which captures the common assignments among the solutions. When the given input set consists of a single element, this operation simply returns that element.

\begin{definition}
  The \emph{DCSP solution disagreement} \(d\) quantifies the inconsistency among all solutions:
  \begin{equation*}
    d = L - |\textrm{Union}(\mathcal{S})|\,.
  \end{equation*}
\end{definition}

The disagreement \(d\) measures how many variables have different values across the solutions in \(\S\).
If \(d = 0\), which means \(|\mathcal{S}| = 1\), there is only one solution.
This means the optimal hypothesis can be determined by minimizing the NeSy risk, formally we have:

\begin{restatable}{lemma}{lemmaCr}\label{thm:consistent risk}
  For a NeSy task \nesytask{}, if the DCSP solution disagreement \(d = 0\), then the NeSy risk is equivalent to the concept risk. Formally, for any \(f \in \F\):
  \begin{equation*}
    \Nesyrisk(f) \to 0 \iff \truerisk(f) \to 0.
  \end{equation*}
\end{restatable}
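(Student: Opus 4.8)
The plan is to work entirely inside the restricted hypothesis space $\F^*$ (under which the DCSP is defined) and to translate both risks into statements about the DCSP solution set $\mathcal{S}$. The first step is the correspondence between $\F^*$ and relabelings: the defining property of $\F^*$ forces every $f\in\F^*$ to be constant on each cluster $\left<x\right>_i=\{x: g(x)=i\}$ and to send distinct clusters to distinct concepts, so — using that $g$ is onto $\Z$ with $|\Z|=L$ — $f$ agrees, up to a null set, with $\pi_f\circ g$ for a unique permutation $\pi_f$ of $\Z$. Equivalently, $f$ is realized by the DCSP assignment $I_f=(\pi_f(1),\dots,\pi_f(L))$, for which $\mathsf{V}_{I_f}(\x)=f(\x)$ a.e. In particular the true labeling $g$ corresponds to the identity assignment $I^\star=(1,\dots,L)$, and since the data obey $y=\sigma(g(\x))$, $I^\star$ satisfies every constraint with $p(\x,y)>0$, so $I^\star\in\mathcal{S}$.

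Next I would establish the exact equivalence $\Nesyrisk(f)=0\iff I_f\in\mathcal{S}$. Reading $\mathcal{S}$ at the population level (constraints ranging over all $(\x,y)$ with $p(\x,y)>0$), $\Nesyrisk(f)=\operatorname{\Exp}_{(\x,y)}[\Ind(f(\x)\land\KB\not\models y)]=0$ holds iff $f(\x)\land\KB\models y$ for every such $(\x,y)$, which by the first step is precisely the statement that $I_f$ satisfies all constraints, i.e. $I_f\in\mathcal{S}$. Now $d=0$ means $|\textrm{Union}(\mathcal{S})|=L$, so all solutions agree on all $L$ variables, i.e. $\mathcal{S}=\{I^\star\}$. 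Hence $\Nesyrisk(f)=0\iff I_f=I^\star\iff f=g$ a.e. $\iff\truerisk(f)=0$, which is the claimed equivalence at level $0$.

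To obtain the ``$\to 0$'' form I would add a quantitative gap. For the forward implication, if $f=f_\pi$ with $\pi\ne\mathrm{id}$ then $I_\pi\notin\mathcal{S}$, so $\Nesyrisk(f_\pi)>0$; since there are only finitely many permutations, $\kappa:=\min_{\pi\ne\mathrm{id}}\Nesyrisk(f_\pi)>0$ is a strictly positive, distribution-dependent constant (the $\kappa$ appearing in \cref{thm: main theorem}). Thus any $f\in\F^*$ with $\Nesyrisk(f)<\kappa$ already has $\pi_f=\mathrm{id}$, hence $\truerisk(f)=0$, so $\Nesyrisk(f)\to 0$ forces $\truerisk(f)$ to reach $0$. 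For the converse a union bound over the $m$ coordinates of $\x$ gives $\Ind(f(\x)\land\KB\not\models y)\le\sum_{i=1}^m\Ind(f(x_i)\ne g(x_i))$ — if $f(x_i)=g(x_i)$ for all $i$ then $f(\x)=g(\x)$ and $g(\x)\land\KB\models y$ since $y=\sigma(g(\x))$ — hence $\Nesyrisk(f)\le m\cdot\truerisk(f)$ and $\truerisk(f)\to 0$ implies $\Nesyrisk(f)\to 0$; this direction uses neither $\F^*$ nor $d=0$.

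The arithmetic is routine; the care is concentrated in the first step and in the meaning of ``$\to 0$''. I would need to justify the relabeling structure of $\F^*$ precisely — surjectivity of $g$ onto $\Z$ and the handling of null-set discrepancies — and to be explicit that $\mathcal{S}$ and $d$ are population-level objects. The sample DCSP built from $N$ i.i.d. draws has the same solution set once all of the finitely many positive-mass ``critical'' constraints have been drawn, which happens almost surely as $N\to\infty$, so the asymptotic statement is unaffected; the converse in the third step also relies on the standard NeSy convention that each coordinate $x_i$ shares the marginal defining $\truerisk$. That population-versus-sample bookkeeping, rather than any hard inequality, is the main thing to get right.
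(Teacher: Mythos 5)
Your proof is correct, and while it rests on the same underlying idea as the paper's --- the restricted hypothesis space collapses each $f$ to a cluster-level assignment, and uniqueness of the DCSP solution pins that assignment to the true labeling --- your execution is noticeably more structured and quantitative. The paper argues the hard direction by contraposition in two sentences: a nonzero concept risk means some cluster $\left<x\right>_i$ is systematically mislabeled, and since the DCSP solution is unique this mislabeling must eventually violate a positive-mass constraint, so $\Nesyrisk \not\to 0$. You instead make the correspondence $f \leftrightarrow \pi_f \leftrightarrow I_f$ explicit, prove the exact level-zero equivalence $\Nesyrisk(f)=0 \iff I_f \in \mathcal{S} = \{I^\star\}$, and then convert it to the asymptotic statement via the gap $\min_{\pi\neq\mathrm{id}}\Nesyrisk(f_\pi) > 0$ over the finitely many permutations. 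This gap argument is a genuine improvement in rigor: the paper's ``$\to 0$'' claim really does need such a uniform separation, and your finite-minimum observation supplies it cleanly. Your bound $\Nesyrisk(f)\le m\cdot\truerisk(f)$ for the easy direction is likewise sharper than the paper's one-line remark. Two small caveats: your identification of the gap constant with the $\kappa$ of the informal main theorem is loose (that $\kappa$ is the minimum sampling probability of a concept sequence, not a minimum risk over permutations; they are related through Assumption 4.9 but not equal), and your permutation step needs $g$ surjective onto $\Z$, which you correctly flag but which the paper never states explicitly either. Neither affects validity.
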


\begin{proof}[Proof Sketch]
  The direction from the right-hand side to the left-hand side is straightforward; here, we focus on proving the reverse direction.
  We demonstrate this by showing that \emph{if the concept risk is non-zero, then the NeSy risk cannot be zero}~(contraposition).
  If the concept risk is non-zero, there must be at least one misclassified instance where \(f\) assigns an incorrect label. Given that the DCSP solution is unique and there is no disagreement (i.e., \(d = 0\)), any such misclassification directly results in a non-zero NeSy risk. Therefore, if the NeSy risk is zero, it follows that the concept risk must also be zero.
\end{proof}

The detailed proof is provided in \cref{appendix: proof of consistent risk}.
To further investigate the learnability of NeSy tasks, we introduce the following mild assumptions.

\begin{restatable}{assumption}{assumone}{\normalfont (Finite Cardinality)\;}\label{assumption: finite size}
  The set of possible concept sequences, \(\B = \bigcup_{y \in \Y} A(y)\), has finite cardinality.
\end{restatable}
\begin{restatable}{assumption}{assumtwo}{\normalfont (Non-vanishing Probability)\;}\label{assumption: non-zero prob}
  The sampling process is controlled by a distribution \(P(Z)\), and for any concept sequence \(\z \in \B\), the probability of being sampled is at least a small positive constant \(\kappa\).
\end{restatable}

Under these assumptions, we formally present the main result of this paper as follows.

\begin{restatable}{theorem}{thmLearnable}\label{thm: learnable}
  For a neuro-symbolic task \(\mathcal{T}\) with a restricted hypothesis space \(\F^*\), the learnability is determined by the conditions:
  \begin{itemize}[itemsep=0.2ex,topsep=0.1ex,leftmargin=1.2em]
    \item
      If the derived constraint satisfaction problem has a unique solution, the task is \emph{learnable}.
      Specifically, the concept error is bounded by \(\epsilon\), given that the sample size \(N\) satisfies the following condition:
      \begin{equation*}
        N > \frac{1}{\kappa} \cdot \log\left({|\B|}/{\epsilon}\right)\,.
      \end{equation*}
    \item
      Otherwise, the task is \emph{unlearnable}.
  \end{itemize}
\end{restatable}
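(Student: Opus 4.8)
The plan is to handle the two cases separately, with the unique-solution case requiring a concentration argument and the non-unique case invoking \cref{thm:curse of ambiguity}.

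For the \emph{unlearnable} case (non-unique DCSP solution): Suppose the DCSP has at least two solutions $I_1 \neq I_2$ in $\mathcal{S}$, so $d \geq 1$. The key observation is that a restricted hypothesis $f \in \F^*$ induces, via its clusters $\langle x\rangle_1,\dots,\langle x\rangle_L$, an assignment of concept labels to variables; if this assignment corresponds to a DCSP solution, then $\Nesyrisk(f) = 0$. Since both $I_1$ and $I_2$ are valid solutions, each gives rise to a hypothesis in $\F^*$ with zero NeSy risk, but at least one of them disagrees with the true labeling function $g$ on a cluster of non-vanishing probability mass (by \cref{assumption: non-zero prob}), hence has $\truerisk$ bounded away from zero. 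Therefore ERM over the NeSy risk cannot distinguish the correct hypothesis from an incorrect one, and for $\epsilon$ smaller than this mass, no $N$ suffices; this is essentially the content of \cref{thm:curse of ambiguity} specialized to $\F^*$. I would make precise that the true labeling $g$ itself corresponds to \emph{some} solution (by \cref{assumption: non conflict} and the definition of $\F^*$), and that any other solution differs from it on at least one variable.

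For the \emph{learnable} case (unique DCSP solution, $d=0$): By \cref{thm:consistent risk}, since $d=0$ the NeSy risk and concept risk vanish together, so it suffices to show that ERM drives $\empNesyrisk$ — and hence $\Nesyrisk$ — to zero with high probability once $N$ is large enough, and then translate this into the stated bound on $\truerisk$. The core combinatorial fact is that within $\F^*$ there are at most $|\B|$ (or some task-specific constant bounded by $|\B|$) ``bad'' hypotheses $f$ whose induced variable assignment is \emph{not} the unique solution but which could still be consistent with a given single constraint; more carefully, I would enumerate the candidate assignments $\mathsf V$ that are \emph{not} the true solution and bound their number. For each such bad assignment, the probability that a single i.i.d. sample $(\x,y)$ fails to expose it — i.e., that $\mathsf V(\x)\land\KB\models y$ even though $\mathsf V$ is wrong — is at most $1-\kappa$, using \cref{assumption: non-zero prob}: there is a distinguishing concept sequence $\z\in\B$ sampled with probability $\geq\kappa$ on which the bad assignment produces the wrong answer. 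Hence the probability that a bad assignment survives all $N$ samples is at most $(1-\kappa)^N \leq e^{-\kappa N}$. A union bound over the at most $|\B|$ bad assignments gives failure probability at most $|\B|\,e^{-\kappa N}$, which is below $\epsilon$ (playing the role of $\delta$, or combined with \cref{thm:consistent risk} to control $\truerisk$) precisely when $N > \frac{1}{\kappa}\log(|\B|/\epsilon)$.

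The main obstacle I anticipate is making the counting step rigorous: precisely identifying the ``task-specific constant $|\B|$'' as the right cardinality to union-bound over. The subtlety is that a hypothesis in $\F^*$ is determined by an assignment of the $L$ variables, of which there are $L^L$ in principle, far more than $|\B|$; the claim must be that only assignments whose image lies in the support of $P(Z)$ — equivalently, assignments built from the concept sequences appearing in $\B$ — are relevant, because any other assignment either is immediately inconsistent with some observed constraint or never arises from a distribution-compatible clustering. I would need to argue carefully that it suffices to union-bound over assignments of the form ``pick a $\z \in \B$ and read off the induced partial variable map,'' and that the number of maximal such consistent bad assignments is at most $|\B|$ (or a simple function thereof). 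Handling the interaction between the clustering structure of $\F^*$ and the constraint structure — ensuring each bad assignment really is killed by a positive-probability event rather than a measure-zero coincidence — is where \cref{assumption: non-zero prob} and \cref{assumption: finite size} must be used in tandem, and getting the constant to come out exactly as $|\B|$ rather than something like $|\B|^2$ or $L\cdot|\B|$ will require the clean combinatorial lemma to be stated just right.
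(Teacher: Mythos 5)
Your treatment of the unlearnable direction matches the paper's: exhibit two distinct DCSP solutions, note both induce hypotheses in \(\F^*\) with zero NeSy risk, observe at least one has concept risk bounded away from zero, and conclude ERM cannot separate them. That part is fine.

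For the learnable direction, however, the gap you yourself flagged is real and your route does not close it. You propose a union bound over ``bad assignments'' of the \(L\) variables, and you need their number to be at most \(|\B|\). There is no reason for that: a hypothesis in \(\F^*\) corresponds to an assignment of \(L\) variables over a domain of size \(L\), so the natural count of non-solution assignments is on the order of \(L^{L}-1\), which is unrelated to (and typically vastly larger than) \(|\B| = |\bigcup_y A(y)| \le L^m\). Your attempted repair --- restricting attention to assignments ``built from concept sequences appearing in \(\B\)'' --- does not obviously reduce the count to \(|\B|\), and you would at best recover a bound with \(\log(L^L/\epsilon) = L\log L + \log(1/\epsilon)\) in place of \(\log(|\B|/\epsilon)\). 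The paper avoids the counting problem entirely by union-bounding over a different family of events: a coupon-collector argument over the concept sequences themselves. Let \(Q\) be the event that some \(\z \in \B\) never appears among the \(N\) samples. If \(Q\) fails, the empirical constraint set coincides with the full constraint set of the DCSP, which by the uniqueness hypothesis admits only the correct assignment; Lemma~\ref{thm:consistent risk} then transfers zero empirical NeSy risk to zero concept risk, so \(\truerisk(\hat f) \le \Pr[Q]\). Each fixed \(\z\in\B\) is missed with probability at most \((1-\kappa)^N\) by Assumption~\ref{assumption: non-zero prob}, so \(\Pr[Q] \le |\B|(1-\kappa)^N \le |\B|e^{-\kappa N}\), which is below \(\epsilon\) exactly when \(N > \frac{1}{\kappa}\log(|\B|/\epsilon)\). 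So the constant \(|\B|\) arises as the number of coupons to collect, not as the number of hypotheses to eliminate; replacing your elimination argument with this observation closes the gap.
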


The proof is in \cref{appendix: proof of learnable}\,.
\Cref{thm: learnable} establish that a NeSy task \(\mathcal{T}\) is \emph{learnable} if and only if the DCSP solution is unique, i.e., disagreement \(d = 0\)\,.
Conversely, if the DCSP has multiple solutions (i.e., \(d \geq 1\)), the task is \emph{unlearnable}, implying that concept error remains \emph{unavoidable} regardless of additional training data.

% Here we provid \cref{alg: disagreement} to compute the disagreement \(d\), one can use any CSP solver for solving such a problem, such as Choco~\citep{Prud2022Choco}.
% Since constraint satisfaction problems~(CSPs) is generally NP-hard~\citep{ComplexityOfSAT, ComplexityOfCSP}, determining whether a NeSy task is learnable is inherently NP-hard,
% and calculate the DCSP solution disagreement is \#P.

Building upon the concept of \ambiguityDegree{}, we derive a more general theorem offering deeper insights into learning errors in a restricted hypothesis space \(\F^*\).
% Consider a learning process that minimizes the empirical NeSy risk \(\empNesyrisk\). 
As the sample size approaches infinity, the hypotheses learned via ERM asymptotically converge to:
\[
  \F^*_{\text{ERM}} = \arg\min_{f \in \F^*} \Nesyrisk(f).
\]
The average error of the ERM result, denoted by \(\mathcal{E}^*\), is the expected concept risk of an arbitrarily selected hypothesis:
\[
  \mathcal{E}^* = \operatorname{\Exp}_{f \in \F^*_{\text{ERM}}} \left[\truerisk(f)\right].
\]

\begin{restatable}{theorem}{thmAEB}\label{thm: average error bound}
  The average error \(\mathcal{E}^*\) is bounded by:
  \begin{equation*}
    \mathcal{E}^* \leq \frac{d}{L}\,.
  \end{equation*}
\end{restatable}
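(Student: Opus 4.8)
The plan is to reduce the statement to an elementary fact about permutations. First I would fix the canonical indexing of the \(L\) clusters: since every \(f \in \F^*\) satisfies \(g(x_1)=g(x_2) \iff f(x_1)=f(x_2)\), all hypotheses in \(\F^*\) induce the \emph{same} partition of \(\X\), namely \(\langle x\rangle_i := g^{-1}(i)\) for \(i\in[L]\), and each \(f\in\F^*\) is then determined by a bijection \(\pi_f \colon [L]\to\Z\) (a permutation) via \(f(x)=\pi_f(i)\) whenever \(x\in\langle x\rangle_i\). Under this identification \(g\) is the identity permutation, and \(\truerisk(f)=\Pr_X\big[X\in\bigcup_{i\,:\,\pi_f(i)\neq i}\langle x\rangle_i\big]=\sum_{i\,:\,\pi_f(i)\neq i} p_i\), where \(p_i:=\Pr[g(X)=i]\) and \(\sum_{i\in[L]}p_i=1\). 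A DCSP solution is, by definition, precisely a permutation that respects relabelling on every constraint, so I identify the solution space \(\S=\{\pi_1,\dots,\pi_k\}\) with a set of permutations of \([L]\).

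Next I would pin down \(\F^*_{\textrm{ERM}}\). Because \(g\in\F^*\) and \(\Nesyrisk(g)=0\), the minimum of \(\Nesyrisk\) over \(\F^*\) is \(0\), hence \(\F^*_{\textrm{ERM}}=\{f\in\F^*:\Nesyrisk(f)=0\}\); moreover \(\Nesyrisk(f)=0\) holds exactly when \(\pi_f\) satisfies every positive-probability constraint, i.e. exactly when \(\pi_f\in\S\) in the asymptotic (full-support) regime. Thus \(\F^*_{\textrm{ERM}}\) is in bijection with \(\S\), and in particular \(\mathrm{id}\in\S\). The key structural step is then: since \(\mathrm{id}\in\S\), any variable \(V_i\) lying in \(\textrm{Union}(\S)\) must be assigned the value \(i\) by \emph{every} solution (it is assigned \(i\) by the identity solution, and \(\textrm{Union}\)-variables agree across all solutions), so \(\pi_j(i)=i\) for all \(j\) whenever \(V_i\in\textrm{Union}(\S)\). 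Writing \(D:=\{i\in[L]:V_i\notin\textrm{Union}(\S)\}\), we get \(|D|=L-|\textrm{Union}(\S)|=d\) and \(\{i:\pi_j(i)\neq i\}\subseteq D\) for every solution \(\pi_j\).

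Finally I would combine the pieces: for each \(f_j\leftrightarrow\pi_j\in\F^*_{\textrm{ERM}}\), \(\truerisk(f_j)=\sum_{i:\pi_j(i)\neq i}p_i\le\sum_{i\in D}p_i\). Since a hypothesis in \(\F^*_{\textrm{ERM}}\) therefore mislabels at most \(d\) of the \(L\) concept clusters, a balanced concept distribution (\(p_i=1/L\)) gives \(\truerisk(f_j)\le d/L\) for every \(j\), and taking \(\Exp_{f\in\F^*_{\textrm{ERM}}}[\cdot]\) (for any weighting of \(\F^*_{\textrm{ERM}}\)) preserves the inequality, yielding \(\mathcal{E}^*\le d/L\). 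Without the balanced-distribution assumption the same argument delivers the cruder bound \(\mathcal{E}^*\le\sum_{i\in D}p_i\), which specializes to \(d/L\) under uniformity.

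I expect two points to need care. The main obstacle is justifying the asymptotic identification \(\F^*_{\textrm{ERM}}\leftrightarrow\S\): one must argue that as \(N\to\infty\) the empirical \(\Nesyrisk\)-minimizers converge to the population \(\Nesyrisk\)-zero set (immediate since \(\F^*\) contains at most \(L!\) hypotheses, so uniform convergence is trivial), and that the population condition ``\(\Nesyrisk=0\)'' coincides with satisfying the DCSP constraints once measure-zero \((\x,y)\) are ignored — this is where \Cref{thm: surrogate}, \Cref{thm:consistent risk}, and the non-vanishing-probability assumption enter, and where one should interpret \(\S\) and \(d\) at the limiting constraint set. The second, lighter, point is the \(\textrm{Union}\) bookkeeping: confirming from the definition (including its single-element convention) that \(|D|=d\) and that \(\mathrm{id}\in\S\) indeed forces every \(\textrm{Union}\)-variable to be a fixed point of every solution; after that, everything reduces to the elementary permutation estimate above.
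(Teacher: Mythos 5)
Your argument is essentially the paper's own proof with the bookkeeping made explicit: the paper likewise observes that every hypothesis in \(\F^*_{\text{ERM}}\) can only err on the \(d\) clusters outside \(\textrm{Union}(\S)\), and concludes \(\max_{f\in\F^*_{\text{ERM}}}\truerisk(f)=d/L\) in three sentences. Your additions — the identification of each \(f\in\F^*\) with a bijection on \([L]\), the observation that \(g\in\F^*_{\text{ERM}}\) forces every \(\textrm{Union}\)-variable to be a fixed point of every solution (so the error set of each ERM hypothesis lies inside \(D\) with \(|D|=d\)), and the finite-class uniform-convergence remark justifying the asymptotic identification of \(\F^*_{\text{ERM}}\) with \(\S\) — are all left implicit in the paper's version, and your treatment is the more rigorous one. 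The one substantive point you surface that the paper does not is the distributional caveat: passing from ``each ERM hypothesis mislabels at most \(d\) of the \(L\) clusters'' to the numerical bound \(d/L\) requires the clusters to carry equal probability mass; the honest general bound is \(\sum_{i\in D}p_i\), which can exceed \(d/L\) when the disagreeing concepts are over-represented under the data distribution. The paper's proof silently equates the two, so your explicit flagging of the uniformity assumption is an improvement in rigor rather than a deviation in approach; the remainder of your argument matches the paper's route step for step.
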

\begin{proof}
  Recall that the DCSP solution disagreement \(d\) is given by \(d = L - \textrm{Union}(\mathcal{S})\), where \(\textrm{Union}(\mathcal{S})\) represents the common assignments among the solutions.  
  Since the restricted hypothesis space ensures that instances with the same assigned label correspond to the same concept and vice versa.  
  In the worst case, errors occur in at \(d\) classes, so the maximum true risk is \( \max_{f \in \F^*_{\text{ERM}}} \truerisk(f) = {d}/{L} \).  
  Thus, the average error is bounded by:
  \(
    \mathcal{E}^* \leq \max_{f \in \F^*_{\text{ERM}}} \truerisk(f) = {d}/{L}.
  \)
\end{proof}
\Cref{thm: average error bound} provides an asymptotic error analysis for NeSy tasks, indicating that as the \ambiguityDegree{} \(d\) increases, the upper bound of the concept error also increases.
Revealing that the disagreement \(d\) is crucial to understanding the learnability of NeSy tasks.

\subsection{Examples}

Here we present some examples for better understanding the learnability conditions of a NeSy task.
To demonstrate the distinction between \emph{learnable} and \emph{unlearnable} tasks, we utilize digital images as input data to give a few examples.
The data is modeled as \(\x = (x_1, x_2) \in \X^2\), where \(\X\) represents the space of digit images, e.g., \(\{\IMGO, \IMGI, \dots\}\).
The intermediate concept space \(\Z\) and the label space \(\Y\) depend on the specific knowledge base.

Table~\ref{tab:learnability_examples} provides a summary of the examples.
Learnable tasks are straightforward to verify, as prior research has demonstrated the effectiveness of NeSy methods for these tasks~\citep{deepproblog_aij, Li2023Scallop, a3bl}. For the unlearnable cases, we present two distinct solutions to the DCSP for each example.

For the XOR task~(\(  d/L=1\)), if the concepts \(\mathtt{0}\) and \(\mathtt{1}\) are interchanged, e.g., \((\IMGO \mapsto \mathtt{0}, \IMGI \mapsto \mathtt{1})\) and \((\IMGO \mapsto \mathtt{1}, \IMGI \mapsto \mathtt{0})\), the NeSy risk can be minimized.

For the modular addition task (\(k = \mathtt{9}, d/L=0.2\)), if the mappings of \(\mathtt{0}\) and \(\mathtt{9}\) are swapped, e.g., \((\IMGO \mapsto \mathtt{0}, \IMGNine \mapsto \mathtt{9})\) and \((\IMGO \mapsto \mathtt{9}, \IMGNine \mapsto \mathtt{0})\), the NeSy risk can be minimized.

\begin{table}[!htb]
  \centering
  \caption{Examples of learnable and unlearnable tasks. In specific, the modular addition task requires \(2\leq k\leq 10\).}
  \label{tab:learnability_examples}
  \centering
  % \small % 调整字体大小
  \resizebox{.5\linewidth}{!}{
  \begin{tabular}{lll}
    \toprule
    {Category} & {Task} & {Knowledge Base} \\ \midrule
    \multirow{2}{*}{\emph{Learnable}}
    & Addition & \( y = z_1 + z_2 \) \\
    & Multiplication & \( y = z_1 \times z_2 \) \\ \midrule
    \multirow{2}{*}{\emph{Unlearnable}}
    & Exclusive OR & \( y = z_1 \oplus z_2 \) \\
    & Modular Addition & \( y = (z_1 + z_2)\;\mathtt{mod}\; k\) \\
    \bottomrule
  \end{tabular}
  }
\end{table}

\subsection{Ensemble Unlearnable Tasks}\label{sec: ensemble tasks}

Some NeSy tasks are inherently unlearnable because they admit multiple solutions to their DCSPs, resulting in ambiguity. This ambiguity cannot be resolved by increasing data or improving the learning algorithm, as it stems from intrinsic task properties.
Interestingly, such unlearnable tasks may become learnable when combined in an ensemble framework under a multi-task learning paradigm. The key insight is that tasks can mutually constrain each other, reducing ambiguity.

Consider two unlearnable tasks \(\nesytask_1\) and \(\nesytask_2\) with corresponding solution spaces \(\mathcal{S}_1\) and \(\mathcal{S}_2\). Each task is individually unlearnable, i.e., \(|\mathcal{S}_1| \geq 2\) and \(|\mathcal{S}_2| \geq 2\). In a multi-task learning setting, the model is required to satisfy both tasks simultaneously, thereby constraining the solution to the intersection \(\mathcal{S}_1 \cap \mathcal{S}_2\). This intersection reduces ambiguity and can potentially yield a unique solution. Therefore, by leveraging the unique constraints induced by the intersection of solution spaces, combining unlearnable tasks into an ensemble framework may enable learning.

From the perspective of DCSP, we can formally state the conditions when tasks can ensemble to become learnable:
\begin{corollary}\label{thm: ensemble}
    NeSy tasks become learnable in an ensemble framework if combining their DCSPs results in a unique solution.
\end{corollary}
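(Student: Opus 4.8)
The plan is to reduce the ensemble task to a single NeSy task over the restricted hypothesis space and then invoke \cref{thm: learnable}. First I would fix the notion of ``combining'' DCSPs: given constituent tasks \(\nesytask_1, \dots, \nesytask_t\) that share the concept space \(\Z\) and, under \(\F^*\), induce the same partition \(\{\langle x\rangle_1, \dots, \langle x\rangle_L\}\) of the raw input into \(L\) clusters, the ensemble task \(\nesytask_{\mathrm{ens}}\) keeps the same variables \(\mathsf{V} = \{V_1, \dots, V_L\}\) and domains \(\mathsf{D}\), while its constraint set is the union \(\mathsf{C}_{\mathrm{ens}} = \bigcup_{i=1}^{t} \mathsf{C}_i\) of the per-task data constraints. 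Since every constraint in \(\mathsf{C}_{\mathrm{ens}}\) is a constraint over these shared variables, an assignment \(I\) satisfies the combined DCSP if and only if it satisfies each \(\langle \mathsf{V}, \mathsf{D}, \mathsf{C}_i\rangle\); hence the ensemble solution space is exactly the intersection \(\mathcal{S}_{\mathrm{ens}} = \bigcap_{i=1}^{t} \mathcal{S}_i\), matching the informal discussion preceding the corollary.

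Next I would restate the hypothesis ``combining their DCSPs results in a unique solution'' as \(|\mathcal{S}_{\mathrm{ens}}| = 1\), i.e., the ensemble disagreement \(d_{\mathrm{ens}} = L - |\mathrm{Union}(\mathcal{S}_{\mathrm{ens}})| = 0\). I would then verify that the standing assumptions transfer to \(\nesytask_{\mathrm{ens}}\): \cref{assumption: non conflict} holds because the combined DCSP is assumed to have a solution (the intersection is nonempty, being a singleton); \cref{assumption: finite size} holds since \(\B_{\mathrm{ens}} = \bigcup_i \B_i\) is a finite union of finite sets; and \cref{assumption: non-zero prob} holds with \(\kappa_{\mathrm{ens}} = \min_i \kappa_i\), up to how the ensemble sampling mixes the task distributions. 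With these checks in place, \(\nesytask_{\mathrm{ens}}\) is a legitimate NeSy task over \(\F^*\) whose DCSP has a unique solution.

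Applying \cref{thm: learnable} to \(\nesytask_{\mathrm{ens}}\) then yields learnability outright, with concept error at most \(\epsilon\) once \(N > \frac{1}{\kappa_{\mathrm{ens}}} \log(|\B_{\mathrm{ens}}|/\epsilon)\); equivalently, by \cref{thm:consistent risk}, minimizing \(\Nesyrisk\) on the ensemble forces \(\truerisk \to 0\). This also recovers the intended reading: even when each \(|\mathcal{S}_i| \ge 2\), a singleton intersection means the cross-task constraints eliminate the residual ambiguity. I do not expect the chain of implications to be the difficulty — it is essentially bookkeeping on top of \cref{thm: learnable} — rather, the main obstacle is making ``combining DCSPs'' well posed: the tasks must share (or be jointly embeddable into) a common concept space and a common input clustering, and their restricted hypothesis spaces must agree on the induced cluster-to-concept correspondence. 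I would surface these as explicit hypotheses of the corollary, or at least as an accompanying remark, rather than leave them implicit.
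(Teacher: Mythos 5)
Your proposal is correct and follows essentially the same route the paper intends: the combined DCSP's solution space is the intersection of the constituent solution spaces, and a singleton intersection means $d_{\mathrm{ens}}=0$, so \cref{thm: learnable} applies directly (the paper leaves this corollary unproved, relying on exactly this informal argument in the preceding discussion). Your additional care in making ``combining DCSPs'' well posed --- shared concept space, shared clustering under $\F^*$, and transfer of \cref{assumption: non conflict,assumption: finite size,assumption: non-zero prob} --- is a genuine improvement in rigor over the paper's presentation, but not a different proof strategy.
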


% Assume the mapping \(I\) is defined as \((\mathtt{0,1,2,3,4,5,6,7,8,9})\).
% \(I_1 = (\mathtt{9,1,2,3,4,5,6,7,8,0})\)
% \(I_2 = (\mathtt{5, 6, 7, 8, 9, 0, 1, 2, 3, 4})\)\
\section{Empirical Study}\label{sec:experiments}
To empirically validate the theoretical results, we conducted a series of experiments.
Due to space limitations, some experimental results are presented in the appendix.

\paragraph{Setup}
\begin{figure*}[!tb]
  \centering
  \begin{subfigure}[b]{\linewidth}
    \includegraphics[width=\linewidth]{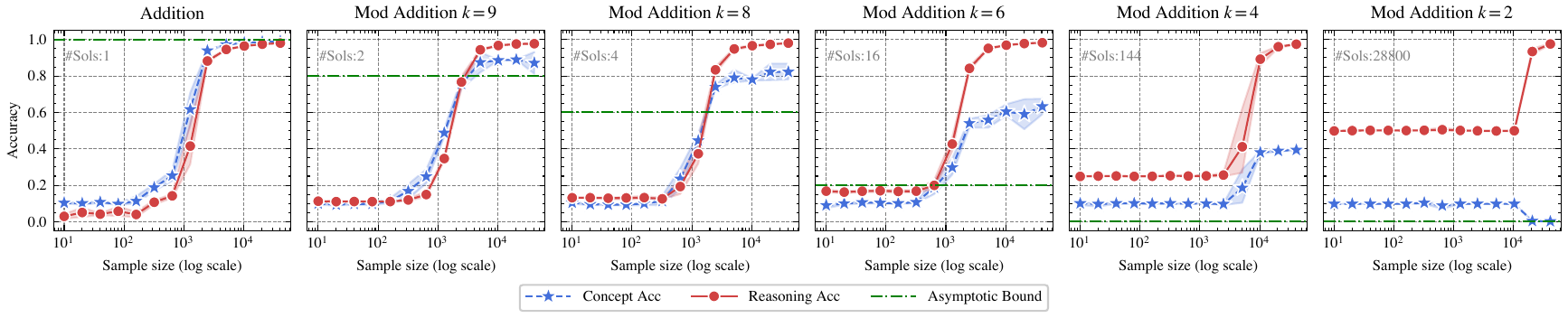}
  \end{subfigure}
  \begin{subfigure}[b]{\linewidth}
    \includegraphics[width=\linewidth]{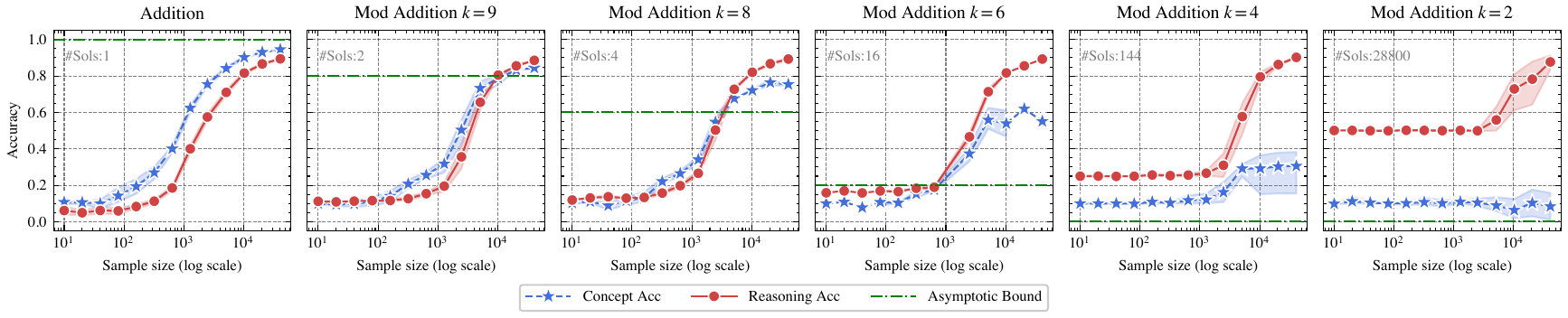}
  \end{subfigure}
  \caption{\emph{Accuracies} versus \emph{sample size} for different NeSy tasks~(top MNIST and bottom KMNIST).
    The shadowed area denotes the standard error.
    The number of the DCSP solutions~({\small\color{gray}\#Sols}) is shown at the top left of each plot.
  The asymptotic bound~({\color{sgreen}green} line) from \cref{thm: average error bound} indicates that concept accuracy should exceed this bound as the sample size grows.}
  \label{fig:sample_complexity}
\end{figure*}
\citet{manhaeve_deepproblog_2018} proposed the Addition task by incorporating the handwritten MNIST~\citep{deng2012mnist} and predefined addition rules.
We extend the setup by including KMNIST~\citep{kmnist}, CIFAR10~\cite{krizhevsky2009cifar}, and SVHN~\citep{SVHN}, mapping class indices to digits and enriching the background knowledge as depicted in \cref{tab:learnability_examples}.
The learning model for MNIST and KMNIST is LeNet~\citep{lecun1995convolutional}, while ResNet50~\citep{he2016deep} is used for CIFAR10 and SVHN.
The results were obtained using an Intel Xeon Platinum 8538 CPU and an NVIDIA A100-PCIE-40GB GPU on an Ubuntu 20.04 platform.
All experiments were conducted five times with different random seeds.
More details can be seen in \cref{appendix: detailed experiments}.

\paragraph{Method}
To effectively optimize the NeSy risk~\cref{eq: nesy risk}, we adopt the following surrogate~(cf. proof in \cref{appendix: proof of surrogate}):
\begin{equation}\label{eq: a3bl}
  -\Exp_{(\x,y)} \log \left(\sum_{\bar{\z} \in \mathcal{N}(y)}\Pr\left[ y, \bar{\z}\mid \x;f,\KB\right]\right)\,,
\end{equation}
which is flexible, where \(\mathcal{N}(y) \subseteq A(y)\) represents several valid candidates for the final answer \(y\).
By restricting the size of \(\mathcal{N}(y)\) from the entire set \(A(y)\) to the most likely candidate \(\bar\z\), we achieve a balance between PNL and ABL, and we set the size of \(\mathcal{N}(y)\) is \(\min\left(16, |A(y)|\right)\).
The implementation is based on the code of \citet{a3bl}.
For brevity, detailed experiments on PNL and ABL are provided in \cref{appendix: detailed experiments}.

\subsection{Empirical Analysis on Learnability}
\begin{figure}[!htb]
    \centering
    \includegraphics[width=.55\linewidth]{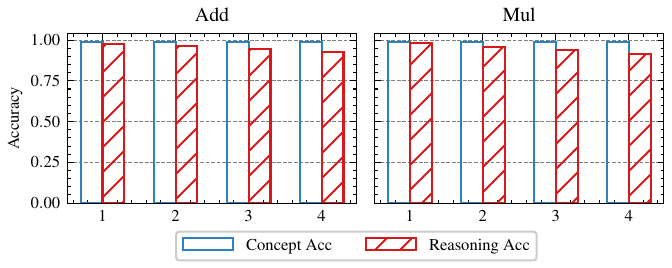}
    \caption{Accuracies on the learnable tasks. }
    \label{fig:learnability}
  \end{figure}
We empirically evaluate the learnability of NeSy tasks based on \cref{thm: learnable}, focusing on two key aspects: (i) validating that minimizing the NeSy risk consistently minimizes the concept risk for learnable tasks, and (ii) examining how DCSP solution disagreement affects learnability.

\emph{(i) Validation of Learnable Tasks.}
We first validate the learnability conditions~(cf. \cref{thm: learnable}) by examining addition and multiplication tasks (cf. \cref{tab:learnability_examples}). 
By solving the DCSP shows that both tasks are learnable, and their learnability remains unaffected by increases in digit size (e.g., from \(\mathtt{PROD}(\IMGI,\IMGII)=\mathtt{2}\) to \(\mathtt{PROD}(\IMGI\IMGO,\IMGII\IMGO)=\mathtt{200}\)). The raw dataset in \cref{fig:learnability} is MNIST, and additional results for other datasets are in the appendix.
We further substantiate learnability by examining tasks with varying digit sizes, ranging from one to four digits. 
As depicted in \cref{fig:learnability}, the results confirm that:
(a) Optimization of the surrogate risk \cref{eq: a3bl} effectively minimizes the NeSy risk.
(b) For learnable tasks, a good minimizer of the NeSy risk also serves as a reliable minimizer of the concept risk.

\begin{figure*}[!tb]
    \centering
    \begin{subfigure}[b]{\linewidth}
       \label{fig: ensemble of unlearnable tasks: unaffectedlearnable case}
      \begin{subfigure}[b]{.495\linewidth}
        \centering
        \includegraphics[width=\linewidth]{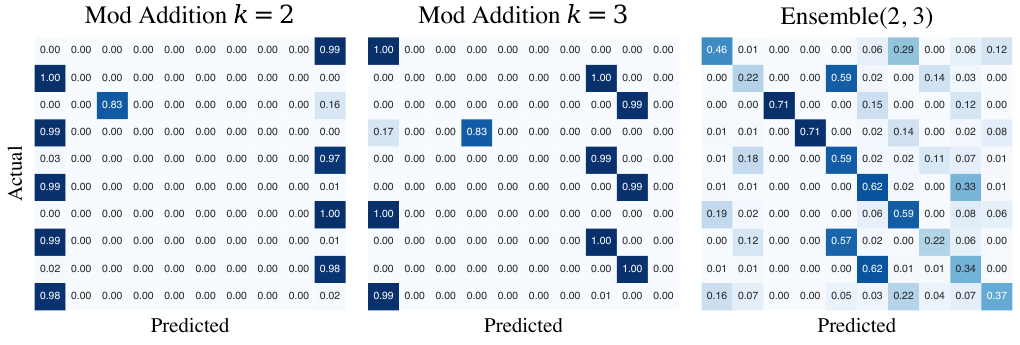}
      \end{subfigure} \hfill
      \begin{subfigure}[b]{.495\linewidth}
        \centering
        \includegraphics[width=\linewidth]{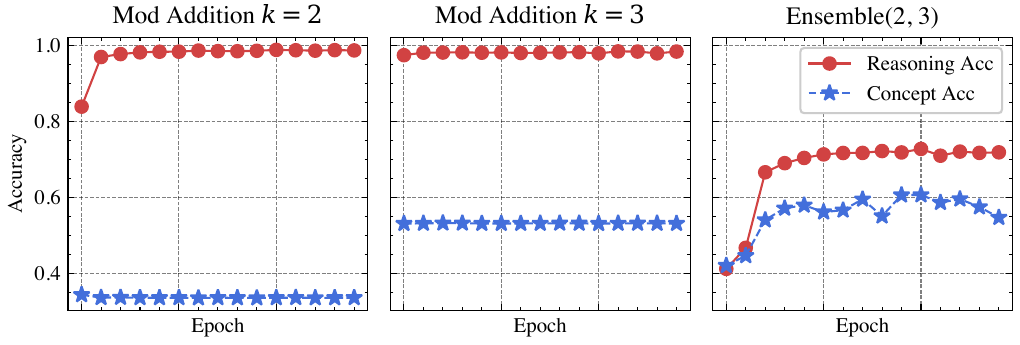}
      \end{subfigure}
    \end{subfigure} \vfill
  
    \begin{subfigure}[b]{\linewidth}
       \label{fig: ensemble of unlearnable tasks: learnable case}
      \begin{subfigure}[b]{.495\linewidth}
        \centering
        \includegraphics[width=\linewidth]{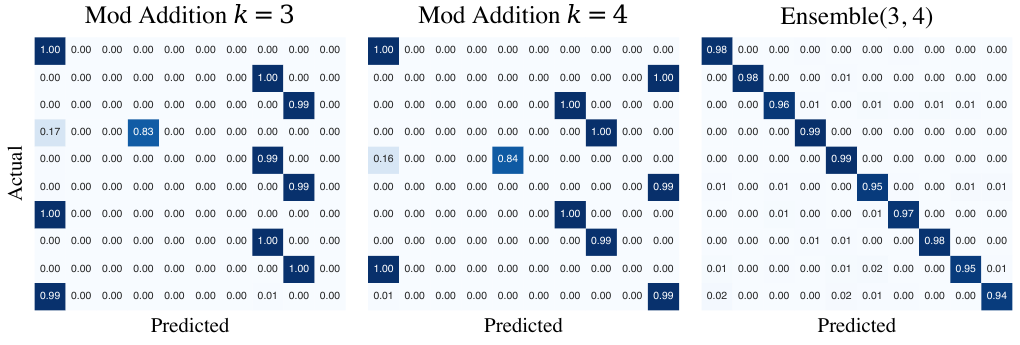}
      \end{subfigure} \hfill
      \begin{subfigure}[b]{.495\linewidth}
        \centering
        \includegraphics[width=\linewidth]{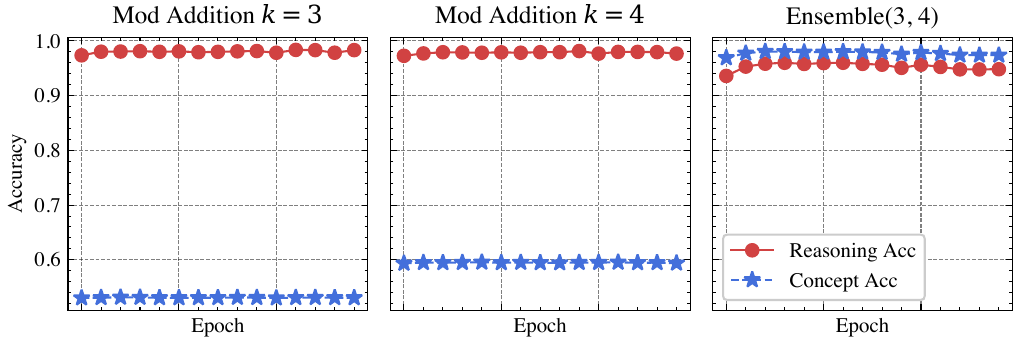}
      \end{subfigure}
    \end{subfigure}
    \caption{\emph{Ensemble} of unlearnable NeSy tasks. 
    The left shows confusion matrices and the right displays accuracy curves. 
    (a) The top row illustrates an \emph{unlearnable} case, where combining the tasks still results in multiple DCSP solutions. 
    (b) The bottom row illustrates a \emph{learnable} case, where combining the tasks reduces the DCSP solutions to a single one.}

    \label{fig: ensemble of unlearnable tasks}
  \end{figure*}
\emph{(ii) Impact of DCSP Solution Disagreement.}
We further investigate how disagreement in DCSP solutions impacts learnability. 
According to \cref{thm: average error bound}, the asymptotic error is bounded by the ratio of DCSP solution disagreement \(d\) to the size of the concept space \(L\).
Experiments involving addition and modular addition tasks with varying modular bases \(k\) reveal that altering the knowledge base changes the DCSP solution space, directly influencing learnability.
For clarity, we plot the asymptotic accuracy bound for each task, i.e., \(1 - d/L\), showing that higher disagreement results in a lower bound line~(green).
As shown in \cref{fig:sample_complexity}:
(a) Tasks with a unique DCSP solution are learnable;
(b) Tasks with high DCSP disagreement struggle to achieve low concept risk, even as the sample size increases.

In summary, our empirical analysis confirms the theoretical learnability conditions by demonstrating that minimizing the NeSy risk reliably minimizes the concept risk for learnable tasks; 
Furthermore, tasks with lower disagreement exhibit better learnability, while those with high disagreement suffer from ambiguity due to multiple conflicting solutions.

\subsection{Ensemble of Unlearnable NeSy Tasks}
With the DCSP framework, we find that
certain NeSy tasks are inherently unlearnable because their DCSPs admit multiple solutions, resulting in inherent ambiguity. However, when combined in an ensemble framework within a multi-task learning setting, such tasks may become learnable by enforcing mutual consistency, as shown in \cref{thm: ensemble}. 
We evaluate \cref{thm: ensemble} using mod addition tasks with mod bases \(k_1\) and \(k_2\) under two specific configurations: an unlearnable ensemble (\(k_1 = \mathtt{2}\), \(k_2 = \mathtt{3}\)) and a learnable ensemble (\(k_1 = \mathtt{3}\), \(k_2 = \mathtt{4}\)). For \(k = \mathtt{2, 3, 4}\), the degree of DCSP solution disagreement \(d\) is \(10\). The experiments in \cref{fig: ensemble of unlearnable tasks} are based on the raw MNIST dataset.
Additional details and experiments are provided in \cref{app:details of ensemble experiments}.

In the top of \cref{fig: ensemble of unlearnable tasks}, the unlearnable case (\(k_1 = \mathtt{2}\), \(k_2 = \mathtt{3}\)) shows that while the ensemble narrows the solution space, reducing the disagreement \(d\) to \(8\), it does not converge to a unique solution, and the task remains unlearnable.

In the bottom of \cref{fig: ensemble of unlearnable tasks}, the learnable case (\(k_1 = \mathtt{3}\), \(k_2 = \mathtt{4}\)) illustrates that both tasks initially admit multiple DCSP solutions, causing reasoning accuracy to exceed concept accuracy, as shown in \cref{fig: ensemble of unlearnable tasks}. Following the ensemble, the intersection of solution spaces yields a unique solution, with the disagreement \(d\) reduced to \(0\), rendering the ensemble task learnable.

This experimental result supports \cref{thm: ensemble}, demonstrating that forming ensembles of different NeSy tasks can enhance learnability by mutually constraining DCSP solution spaces. 
This finding suggests that we can collect tremendous NeSy tasks and jointly learn them in an ensemble manner, which could potentially introduce a ``scaling law’’~\citep{kaplan2020scalinglawsneurallanguage} in the NeSy domain.
\section{Limitation}\label{sec: limitation}

This paper focuses exclusively on hybrid neuro-symbolic systems, e.g., probabilistic neuro-symbolic and abductive learning methods.
Thus the findings may not directly extend to other types of neuro-symbolic methods. 
The analysis of this study relies on a restricted hypothesis space, which is inherently satisfied by models such as neural networks equipped with manifold regularization~\citep{belkin2006manifold} or self-supervised pretraining~\citep{liu2021self}. 
However, extending the framework to encompass more general hypothesis spaces without requiring this specific property remains an open challenge. 

Future work may involve a deeper investigation into extending the learnability framework to encompass a broader range of NeSy systems.
Additionally, exploring the learnability of the semi-supervised case of NeSy tasks, where some training examples are supervised for intermediate concepts, could be an interesting direction.
Developing practical strategies for constructing effective task ensembles aslo represents a promising avenue for improving learnability in diverse and complex scenarios.

\section{Conclusion}\label{sec: conclusion}

We disclose that a neuro-symbolic (NeSy) task is learnable if and only if the derived constraint satisfaction problem (DCSP) has a unique solution.
Using the DCSP framework, we can determine whether a NeSy task is learnable and derive the task-specific asymptotic bound for the concept error.
Moreover, we find that constructing ensembles of previously unlearnable tasks reduces the degree of ambiguity, thereby enhancing overall task learnability. 
Experimental results validate our theoretical findings.

% \clearpage
% \newpage
\section*{Impact Statement}
This paper presents a theoretical analysis of the \emph{learnability} of neuro-symbolic learning, focusing on hybrid systems such as probabilistic neuro-symbolic learning and abductive learning methods.
This work's impact lies in providing theoretical insights into these systems and elucidating the underlying mechanisms of the recently observed "reasoning shortcut" phenomenon.
We do not anticipate that this work will introduce any negative ethical or social impacts.
\bibliography{strings,ref}

\begin{thebibliography}{61}
\providecommand{\natexlab}[1]{#1}
\providecommand{\url}[1]{\texttt{#1}}
\expandafter\ifx\csname urlstyle\endcsname\relax
  \providecommand{\doi}[1]{doi: #1}\else
  \providecommand{\doi}{doi: \begingroup \urlstyle{rm}\Url}\fi

\bibitem[Alon et~al.(1997)Alon, Ben-David, Cesa-Bianchi, and Haussler]{Alon97Learnability}
Alon, N., Ben-David, S., Cesa-Bianchi, N., and Haussler, D.
\newblock {Scale-sensitive Dimensions, Uniform convergence, and Learnability}.
\newblock \emph{Journal of the ACM}, 44\penalty0 (4):\penalty0 615–631, July 1997.

\bibitem[Bartlett \& Maass(2003)Bartlett and Maass]{bartlett2003vapnik}
Bartlett, P.~L. and Maass, W.
\newblock {Vapnik-Chervonenkis Dimension of Neural Nets}.
\newblock \emph{The handbook of brain theory and neural networks}, pp.\  1188--1192, 2003.

\bibitem[Belkin et~al.(2006)Belkin, Niyogi, and Sindhwani]{belkin2006manifold}
Belkin, M., Niyogi, P., and Sindhwani, V.
\newblock {Manifold Regularization: A geometric Framework for Learning from Labeled and Unlabeled Examples}.
\newblock \emph{Journal of Machine Learning Research}, 7\penalty0 (11), 2006.

\bibitem[Blumer et~al.(1989)Blumer, Ehrenfeucht, Haussler, and Warmuth]{Blumer89Learnability}
Blumer, A., Ehrenfeucht, A., Haussler, D., and Warmuth, M.~K.
\newblock {Learnability and The Vapnik-Chervonenkis Dimension}.
\newblock \emph{Journal of the ACM}, 36\penalty0 (4):\penalty0 929–965, October 1989.

\bibitem[Bortolotti et~al.(2024)Bortolotti, Marconato, Carraro, Morettin, van Krieken, Vergari, Teso, and Passerini]{bortolotti2024RSbench}
Bortolotti, S., Marconato, E., Carraro, T., Morettin, P., van Krieken, E., Vergari, A., Teso, S., and Passerini, A.
\newblock {A Neuro-Symbolic Benchmark Suite for Concept Quality and Reasoning Shortcuts}.
\newblock In \emph{Advances in Neural Information Processing Systems 37}, 2024.
\newblock Datasets and Benchmarks Track.

\bibitem[Cai et~al.(2021)Cai, Dai, Huang, Li, Muggleton, and Jiang]{cai2021abl}
Cai, L.-W., Dai, W.-Z., Huang, Y.-X., Li, Y.-F., Muggleton, S., and Jiang, Y.
\newblock {A}bductive {L}earning with {G}round {K}nowledge {B}ase.
\newblock In \emph{Proceedings of the 30th International Joint Conference on Artificial Intelligence}, pp.\  1815--1821, Virtual, 2021.

\bibitem[Clanuwat et~al.(2018)Clanuwat, Bober{-}Irizar, Kitamoto, Lamb, Yamamoto, and Ha]{kmnist}
Clanuwat, T., Bober{-}Irizar, M., Kitamoto, A., Lamb, A., Yamamoto, K., and Ha, D.
\newblock {D}eep {L}earning for {C}lassical {J}apanese {L}iterature.
\newblock \emph{CoRR}, abs/1812.01718, 2018.

\bibitem[Dai et~al.(2019)Dai, Xu, Yu, and Zhou]{dai_abl_2019}
Dai, W.-Z., Xu, Q., Yu, Y., and Zhou, Z.-H.
\newblock {B}ridging {M}achine {L}earning and {L}ogical {R}easoning {B}y {A}bductive {L}earning.
\newblock In \emph{Advances in Neural Information Processing Systems 32}, pp.\  2815--2826, Vancouver, BC, Canada, 2019.

\bibitem[De~Raedt et~al.(2007)De~Raedt, Kimmig, and Toivonen]{prolog2007}
De~Raedt, L., Kimmig, A., and Toivonen, H.
\newblock {ProbLog}: {A} {P}robabilistic {P}rolog and {I}ts {A}pplication in {L}ink {D}iscovery.
\newblock In \emph{Proceedings of the 20th International Joint Conference on Artificial Intelligence}, pp.\  2468--2473, Hyderabad, India, 2007.

\bibitem[Deng(2012)]{deng2012mnist}
Deng, L.
\newblock {T}he {MNIST} {D}atabase of {H}andwritten {D}igit {I}mages for {M}achine {L}earning {R}esearch.
\newblock \emph{IEEE Signal Processing Magazine}, 29\penalty0 (6):\penalty0 141--142, 2012.

\bibitem[Dimopoulos et~al.(1997)Dimopoulos, Nebel, and Koehler]{ASP}
Dimopoulos, Y., Nebel, B., and Koehler, J.
\newblock {E}ncoding {P}lanning {P}roblems in {N}onmonotonic {L}ogic {P}rograms.
\newblock In \emph{Proceedings of the 4th European Conference on Planning}, pp.\  169--181, Toulouse, France, 1997.

\bibitem[Gao et~al.(2024)Gao, Huang, Hu, Zhu, and and]{gao2024abl}
Gao, E.-H., Huang, Y.-X., Hu, W.-C., Zhu, X.-H., and and, W.-Z.~D.
\newblock {K}nowledge-enhanced {H}istorical {D}ocument {S}egmentation and {R}ecognition.
\newblock In \emph{Proceedings of the 38th AAAI Conference on Artificial Intelligence}, pp.\  8409 -- 8416, Vancouver, BC, Canada, 2024.

\bibitem[Garcez et~al.(2002)Garcez, Gabbay, and Broda]{nsfa2002}
Garcez, A. S.~d., Gabbay, D.~M., and Broda, K.~B.
\newblock \emph{{N}eural-{S}ymbolic {L}earning {S}ystem: {F}oundations and {A}pplications}.
\newblock Springer-Verlag, Berlin, Heidelberg, 2002.

\bibitem[Hamming(1950)]{hamming_distance}
Hamming, R.~W.
\newblock {E}rror {D}etecting and {E}rror {C}orrecting {C}odes.
\newblock \emph{The Bell System Technical Journal}, 29\penalty0 (2):\penalty0 147--160, 1950.

\bibitem[He et~al.(2024{\natexlab{a}})He, Dai, and Li]{he2024RILL}
He, H.-Y., Dai, W.-Z., and Li, M.
\newblock {R}educed {I}mplication-bias {L}ogic {L}oss for {N}euro-symbolic {L}earning.
\newblock \emph{Machine Learning}, 113:\penalty0 3357--3377, 2024{\natexlab{a}}.

\bibitem[He et~al.(2024{\natexlab{b}})He, Sun, Xie, and Li]{a3bl}
He, H.-Y., Sun, H., Xie, Z., and Li, M.
\newblock {A}mbiguity-aware {A}bductive {L}earning.
\newblock In \emph{Proceedings of the 41st International Conference on Machine Learning}, volume 235 of \emph{Proceedings of Machine Learning Research}, pp.\  18019--18042, Vienna, Austria, 2024{\natexlab{b}}.

\bibitem[He et~al.(2016)He, Zhang, Ren, and Sun]{he2016deep}
He, K., Zhang, X., Ren, S., and Sun, J.
\newblock {D}eep {R}esidual {L}earning for {I}mage {R}ecognition.
\newblock In \emph{Proceedings of the 2016 IEEE Conference on Computer Vision and Pattern Recognition}, pp.\  770--778, Las Vegas, NV, USA, 2016.

\bibitem[Hitzler \& Sarker(2022)Hitzler and Sarker]{nesySurvey:2022}
Hitzler, P. and Sarker, M.~K. (eds.).
\newblock \emph{Neuro-Symbolic Artificial Intelligence: The State of the Art}.
\newblock {IOS} Press, Amsterdam, 2022.

\bibitem[Hu et~al.(2025)Hu, Dai, Jiang, and Zhou]{hu2025ABLRefl}
Hu, W.-C., Dai, W.-Z., Jiang, Y., and Zhou, Z.-H.
\newblock {Efficient Rectification of Neuro-Symbolic Reasoning Inconsistencies by Abductive Reflection}.
\newblock In \emph{Proceedings of the 39th AAAI Conference on Artificial Intelligence}, Philadelphia, USA, 2025.
\newblock To appear.

\bibitem[Huang et~al.(2021)Huang, Dai, Cai, Muggleton, and Jiang]{huang2021ablsim}
Huang, Y.-X., Dai, W.-Z., Cai, L.-W., Muggleton, S.~H., and Jiang, Y.
\newblock {F}ast {Abductive Learning} by {S}imilarity-based {C}onsistency {O}ptimization.
\newblock In \emph{Advances in Neural Information Processing Systems 34}, pp.\  26574--26584, Virtual, 2021.

\bibitem[Huang et~al.(2024)Huang, Hu, Gao, and Jiang]{ABLkit2024}
Huang, Y.-X., Hu, W.-C., Gao, E.-H., and Jiang, Y.
\newblock {ABLkit: A Python Toolkit for Abductive Learning}.
\newblock \emph{Frontiers of Computer Science}, 18\penalty0 (6):\penalty0 186354, 2024.

\bibitem[Jiao et~al.(2024)Jiao, De~Raedt, and Marra]{jiao2024text2sql_nesy}
Jiao, Y., De~Raedt, L., and Marra, G.
\newblock {Valid Text-to-SQL Generation with Unification-Based DeepStochLog}.
\newblock In \emph{Neural-Symbolic Learning and Reasoning: 18th International Conference, NeSy 2024, Barcelona, Spain, September 9–12, 2024, Proceedings, Part I}, pp.\  312--330, Berlin, Heidelberg, 2024. Springer-Verlag.

\bibitem[Kaplan et~al.(2020)Kaplan, McCandlish, Henighan, Brown, Chess, Child, Gray, Radford, Wu, and Amodei]{kaplan2020scalinglawsneurallanguage}
Kaplan, J., McCandlish, S., Henighan, T., Brown, T.~B., Chess, B., Child, R., Gray, S., Radford, A., Wu, J., and Amodei, D.
\newblock Scaling laws for neural language models, 2020.
\newblock URL \url{https://arxiv.org/abs/2001.08361}.

\bibitem[Karacali \& Krim(2003)Karacali and Krim]{Karacali03NN}
Karacali, B. and Krim, H.
\newblock {Fast Minimization of Structural Risk by Nearest Neighbor Rule}.
\newblock \emph{IEEE Transactions on Neural Networks}, 14\penalty0 (1):\penalty0 127--137, 2003.

\bibitem[Kingma \& Ba(2015)Kingma and Ba]{adam}
Kingma, D.~P. and Ba, J.
\newblock {A}dam: A {M}ethod for {S}tochastic {O}ptimization.
\newblock In \emph{The 2nd International Conference on Learning Representations}, San Diego, CA, USA, 2015.

\bibitem[Krizhevsky(2009)]{krizhevsky2009cifar}
Krizhevsky, A.
\newblock {L}earning {M}ultiple {L}ayers of {F}eatures from {T}iny {I}mages.
\newblock \emph{Technical Report, Department of Computer Science, University of Toronto}, 2009.

\bibitem[LeCun \& Bengio(1998)LeCun and Bengio]{lecun1995convolutional}
LeCun, Y. and Bengio, Y.
\newblock {C}onvolutional {N}etworks for {I}mages, {S}peech, and {T}ime {S}eries.
\newblock In \emph{The Handbook of Brain Theory and Neural Networks}, pp.\  255--258. MIT Press, Cambridge, MA, USA, 1998.

\bibitem[Li et~al.(2023{\natexlab{a}})Li, Huang, and Naik]{Li2023Scallop}
Li, Z., Huang, J., and Naik, M.
\newblock Scallop: A language for neurosymbolic programming.
\newblock \emph{Proceedings of the ACM on Programming Languages}, 7\penalty0 (PLDI), June 2023{\natexlab{a}}.

\bibitem[Li et~al.(2023{\natexlab{b}})Li, Liu, Yao, Xu, Chen, Ma, and Lyu]{li2023learning_wo_shortcut}
Li, Z., Liu, Z., Yao, Y., Xu, J., Chen, T., Ma, X., and Lyu, J.
\newblock {Learning with Logical Constraints but without Shortcut Satisfaction}.
\newblock In \emph{The 11st International Conference on Learning Representations}, 2023{\natexlab{b}}.

\bibitem[Li et~al.(2024)Li, Huang, Li, Yao, Xu, Chen, Ma, and L\"{u}]{Li2024NeSyLC}
Li, Z., Huang, Y., Li, Z., Yao, Y., Xu, J., Chen, T., Ma, X., and L\"{u}, J.
\newblock Neuro-symbolic learning yielding logical constraints.
\newblock In \emph{Advances of the 37th International Conference on Neural Information Processing Systems}, pp.\  21635 -- 21657, New Orleans, LA, USA, 2024. Curran Associates Inc.

\bibitem[Liu et~al.(2021)Liu, Zhang, Hou, Mian, Wang, Zhang, and Tang]{liu2021self}
Liu, X., Zhang, F., Hou, Z., Mian, L., Wang, Z., Zhang, J., and Tang, J.
\newblock {Self-Supervised Learning: Generative or Contrastive}.
\newblock \emph{IEEE Transactions on Knowledge and Data Engineering}, 35\penalty0 (1):\penalty0 857--876, 2021.

\bibitem[Maene et~al.(2024)Maene, Derkinderen, and De~Raedt]{hardnessNesy}
Maene, J., Derkinderen, V., and De~Raedt, L.
\newblock {On the Hardness of Probabilistic Neurosymbolic Learning}.
\newblock In \emph{Proceedings of the 41st International Conference on Machine Learning}, volume 235 of \emph{Proceedings of Machine Learning Research}, pp.\  34203--34218, Vienna, Austria, 2024.

\bibitem[Magnani(2009)]{Magnani09Abductive}
Magnani, L.
\newblock \emph{{A}bductive {C}ognition {-} The {E}pistemological and {E}co-Cognitive {D}imensions of {H}ypothetical {R}easoning}, volume~3 of \emph{Cognitive Systems Monographs}.
\newblock Springer, Berlin, Heidelberg, 2009.

\bibitem[Manginas et~al.(2025)Manginas, Paliouras, and Raedt]{manginas2025nesya}
Manginas, N., Paliouras, G., and Raedt, L.~D.
\newblock {NeSyA: Neurosymbolic Automata}.
\newblock In \emph{Proceedings of the 39th AAAI Conference on Artificial Intelligence}, Philadelphia, USA, 2025.
\newblock To appear.

\bibitem[Manhaeve et~al.(2018)Manhaeve, Dumancic, Kimmig, Demeester, and De~Raedt]{manhaeve_deepproblog_2018}
Manhaeve, R., Dumancic, S., Kimmig, A., Demeester, T., and De~Raedt, L.
\newblock {DeepProbLog}: {N}eural {P}robabilistic {L}ogic {P}rogramming.
\newblock In \emph{Advances in Neural Information Processing Systems 31}, pp.\  3753--3763, Montr{\'e}al, Canada, 2018.

\bibitem[Manhaeve et~al.(2021{\natexlab{a}})Manhaeve, Duman\v{c}i\'{c}, Kimmig, Demeester, and De~Raedt]{deepproblog_aij}
Manhaeve, R., Duman\v{c}i\'{c}, S., Kimmig, A., Demeester, T., and De~Raedt, L.
\newblock {N}eural {P}robabilistic {L}ogic {P}rogramming in {DeepProbLog}.
\newblock \emph{Artificial Intelligence}, 298\penalty0 (C):\penalty0 103504, 2021{\natexlab{a}}.

\bibitem[Manhaeve et~al.(2021{\natexlab{b}})Manhaeve, Marra, Demeester, Dumancic, Kimmig, and De~Raedt]{Manhaeve21PNL}
Manhaeve, R., Marra, G., Demeester, T., Dumancic, S., Kimmig, A., and De~Raedt, L.
\newblock {Neuro-Symbolic AI = Neural + Logical + Probabilistic AI}.
\newblock In \emph{Neuro-Symbolic Artificial Intelligence: The State of the Art}, volume 342 of \emph{Frontiers in Artificial Intelligence and Applications}, pp.\  173--191. IOS Press, 2021{\natexlab{b}}.

\bibitem[Mao et~al.(2019)Mao, Gan, Kohli, Tenenbaum, and Wu]{Mao2019NS-CL}
Mao, J., Gan, C., Kohli, P., Tenenbaum, J.~B., and Wu, J.
\newblock The neuro-symbolic concept learner: Interpreting scenes, words, and sentences from natural supervision.
\newblock In \emph{The 7th International Conference on Learning Representations}, New Orleans, LA, USA, 2019.

\bibitem[Marconato et~al.(2023{\natexlab{a}})Marconato, Bontempo, Ficarra, Calderara, Passerini, and Teso]{Marconato2023Cool}
Marconato, E., Bontempo, G., Ficarra, E., Calderara, S., Passerini, A., and Teso, S.
\newblock {Neuro-Symbolic Continual Learning: Knowledge, Reasoning Shortcuts and Concept Rehearsal}.
\newblock In \emph{Proceedings of the 40th International Conference on Machine Learning}, volume 202, pp.\  23915--23936. JMLR.org, 2023{\natexlab{a}}.

\bibitem[Marconato et~al.(2023{\natexlab{b}})Marconato, Teso, Vergari, and Passerini]{marconato23ReasoningShortcuts}
Marconato, E., Teso, S., Vergari, A., and Passerini, A.
\newblock Not all neuro-symbolic concepts are created equal: Analysis and mitigation of reasoning shortcuts.
\newblock In \emph{Advances in Neural Information Processing Systems}, volume~36, pp.\  72507--72539. Curran Associates, Inc., 2023{\natexlab{b}}.

\bibitem[Marconato et~al.(2024)Marconato, Bortolotti, van Krieken, Vergari, Passerini, and Teso]{marconato24bears}
Marconato, E., Bortolotti, S., van Krieken, E., Vergari, A., Passerini, A., and Teso, S.
\newblock Bears make neuro-symbolic models aware of their reasoning shortcuts.
\newblock In \emph{Proceedings of the 40th Conference on Uncertainty in Artificial Intelligence}, volume 244 of \emph{Proceedings of Machine Learning Research}, pp.\  2399--2433. PMLR, 2024.

\bibitem[Marra et~al.(2024)Marra, Dumančić, Manhaeve, and {De Raedt}]{nesySurvey:aij:2024}
Marra, G., Dumančić, S., Manhaeve, R., and {De Raedt}, L.
\newblock From {S}tatistical {R}elational to {N}eurosymbolic {A}rtificial {I}ntelligence: {A} survey.
\newblock \emph{Artificial Intelligence}, 328:\penalty0 104062, 2024.

\bibitem[Netzer et~al.(2011)Netzer, Wang, Coates, Bissacco, Wu, and Ng]{SVHN}
Netzer, Y., Wang, T., Coates, A., Bissacco, A., Wu, B., and Ng, A.~Y.
\newblock {R}eading {D}igits in {N}atural {I}mages with {U}nsupervised {F}eature {L}earning.
\newblock In \emph{NIPS Workshop on Deep Learning and Unsupervised Feature Learning}, Granada, Spain, 2011.

\bibitem[Prud'homme \& Fages(2022)Prud'homme and Fages]{Prud2022Choco}
Prud'homme, C. and Fages, J.-G.
\newblock {Choco-solver: A Java Library for Constraint Programming}.
\newblock \emph{Journal of Open Source Software}, 7\penalty0 (78):\penalty0 4708, 2022.

\bibitem[Roychowdhury et~al.(2021)Roychowdhury, Diligenti, and Gori]{roychowdhury_regularizing_2021}
Roychowdhury, S., Diligenti, M., and Gori, M.
\newblock {R}egularizing {D}eep {N}etworks with {P}rior {K}nowledge: A {C}onstraint-based {A}pproach.
\newblock \emph{Knowledge-Based Systems}, 222:\penalty0 106989, 2021.

\bibitem[Smet et~al.(2025)Smet, Venturato, Raedt, and Marra]{desmet2025relationalneurosymbolicmarkovmodels}
Smet, L.~D., Venturato, G., Raedt, L.~D., and Marra, G.
\newblock {Relational Neurosymbolic Markov Models}.
\newblock In \emph{Proceedings of the 39th AAAI Conference on Artificial Intelligence}, Philadelphia, USA, 2025.
\newblock To appear.

\bibitem[Sohn et~al.(2020)Sohn, Berthelot, Carlini, Zhang, Zhang, Raffel, Cubuk, Kurakin, and Li]{Sohn20Fixmatch}
Sohn, K., Berthelot, D., Carlini, N., Zhang, Z., Zhang, H., Raffel, C., Cubuk, E.~D., Kurakin, A., and Li, C.
\newblock {FixMatch: Simplifying Semi-Supervised Learning with Consistency and Confidence}.
\newblock In \emph{Advances in Neural Information Processing Systems 33}, pp.\  596 -- 608, Virtual Event, 2020.

\bibitem[Sun(1994)]{sun1994integrating}
Sun, R.
\newblock \emph{{I}ntegrating {R}ules and {C}onnectionism for {R}obust {C}ommonsense {R}easoning}, pp.\  273.
\newblock John Wiley \& Sons, Inc., 1994.

\bibitem[Tao et~al.(2024)Tao, Huang, Dai, and Jiang]{tao2024abl}
Tao, L., Huang, Y.-X., Dai, W.-Z., and Jiang, Y.
\newblock {D}eciphering {R}aw {D}ata in {N}euro-{S}ymbolic {L}earning with {P}rovable {G}uarantees.
\newblock In \emph{Proceedings of the 38th AAAI Conference on Artificial Intelligence}, pp.\  15310 -- 15318, Vancouver, BC, Canada, 2024.

\bibitem[Towell \& Shavlik(1994)Towell and Shavlik]{kbANN94}
Towell, G.~G. and Shavlik, J.~W.
\newblock {K}nowledge-based {A}rtificial {N}eural {N}etworks.
\newblock \emph{Artificial Intelligence}, 70\penalty0 (1):\penalty0 119--165, 1994.

\bibitem[Valiant(1984)]{valiant84Learnable}
Valiant, L.~G.
\newblock {A Theory of The Learnable}.
\newblock \emph{Communications of the ACM}, 27\penalty0 (11):\penalty0 1134–1142, November 1984.

\bibitem[{van Krieken} et~al.(2022){van Krieken}, Acar, and {van Harmelen}]{van_krieken_survey_2022}
{van Krieken}, E., Acar, E., and {van Harmelen}, F.
\newblock {A}nalyzing {D}ifferentiable {F}uzzy {L}ogic {O}perators.
\newblock \emph{Artificial Intelligence}, 302:\penalty0 103602, 2022.

\bibitem[Vapnik(1999)]{vapnik1999overview}
Vapnik, V.~N.
\newblock {A}n {O}verview of {S}tatistical {L}earning {T}heory.
\newblock \emph{IEEE Transactions on Neural Networks}, 10\penalty0 (5):\penalty0 988--999, 1999.

\bibitem[Verreet et~al.(2023)Verreet, De~Raedt, and Bekker]{verreet2023pu_nesy}
Verreet, V., De~Raedt, L., and Bekker, J.
\newblock {Modeling PU Learning Using Probabilistic Logic Programming}.
\newblock \emph{Machine Learning}, 113\penalty0 (3):\penalty0 1351--1372, 2023.

\bibitem[Wang et~al.(2021)Wang, Deng, Xie, Shu, Huang, Cai, Zhang, Zhang, Zhou, and Wu]{wang2021abl}
Wang, J., Deng, D., Xie, X., Shu, X., Huang, Y.-X., Cai, L.-W., Zhang, H., Zhang, M.-L., Zhou, Z.-H., and Wu, Y.
\newblock {T}ac-{V}aluer: {K}nowledge-based {S}troke {E}valuation in {T}able {T}ennis.
\newblock In \emph{Proceedings of the 27th ACM SIGKDD Conference on Knowledge Discovery \& Data Mining}, pp.\  3688--3696, Virtual Event, Singapore, 2021.

\bibitem[Wang et~al.(2023)Wang, Tsamoura, and Roth]{wang_miws_2023}
Wang, K., Tsamoura, E., and Roth, D.
\newblock {O}n {L}earning {L}atent {M}odels with {M}ulti-instance {W}eak {S}upervision.
\newblock In \emph{Advances in Neural Information Processing Systems 36}, pp.\  9661 -- 9694, New Orleans, LA, USA, 2023.

\bibitem[Xu et~al.(2018)Xu, Zhang, Friedman, Liang, and Van~den Broeck]{xu_semantic_2018}
Xu, J., Zhang, Z., Friedman, T., Liang, Y., and Van~den Broeck, G.
\newblock {A} {S}emantic {L}oss {F}unction for {D}eep {L}earning with {S}ymbolic {K}nowledge.
\newblock In \emph{Proceedings of the 35th International Conference on Machine Learning}, pp.\  5502--5511, Stockholm, Sweden, 2018.

\bibitem[Yang et~al.(2024)Yang, Wei, Shao, Li, and Zhou]{yang24abl}
Yang, X.-W., Wei, W.-D., Shao, J.-J., Li, Y.-F., and Zhou, Z.-H.
\newblock Analysis for abductive learning and neural-symbolic reasoning shortcuts.
\newblock In \emph{Proceedings of the 41st International Conference on Machine Learning}, volume 235 of \emph{Proceedings of Machine Learning Research}, pp.\  56524--56541, Vienna, Austria, 2024.

\bibitem[Yang et~al.(2020)Yang, Ishay, and Lee]{neurasp}
Yang, Z., Ishay, A., and Lee, J.
\newblock {NeurASP}: {E}mbracing {N}eural {N}etworks into {A}nswer {S}et {P}rogramming.
\newblock In \emph{Proceedings of the 29th International Joint Conference on Artificial Intelligence}, pp.\  1755--1762, Yokohama, Japan, 2020.

\bibitem[Zhang et~al.(2021)Zhang, Bengio, Hardt, Recht, and Vinyals]{zhang2021understanding}
Zhang, C., Bengio, S., Hardt, M., Recht, B., and Vinyals, O.
\newblock {Understanding Deep Learning (Still) Requires Rethinking Generalization}.
\newblock \emph{Communications of the ACM}, 64\penalty0 (3):\penalty0 107--115, 2021.

\bibitem[Zhou(2019)]{zhou2019abductive}
Zhou, Z.-H.
\newblock {Abductive Learning}: {T}owards {B}ridging {M}achine {L}earning and {L}ogical {R}easoning.
\newblock \emph{Science China Information Sciences}, 62\penalty0 (7):\penalty0 76101, 2019.

\end{thebibliography}
\bibliographystyle{icml2025}

\clearpage
\newpage

\appendix
\onecolumn

\part*{\centering \Large \textbf{Appendix}} \label{appendix}
The appendix is structured as follows.
\begin{itemize}
    \item \Cref{appendix: proof} contains proofs omitted in the main paper, because of the space limit.
    \item \Cref{appendix: detailed experiments} contains more details and additional experiments.
\end{itemize}
\section{Proofs}\label{appendix: proof}
In this section, we give the proofs that are omitted in the main text.
For the convenience of the reader, we re-state the assumptions, lemmas, propositions, and theorems in the appendix again.

\subsection{Proof of \Cref{thm: surrogate}}\label{appendix: proof of surrogate}
\thmsurrogate*
\begin{proof}
  First, we recall the risks as follows:
  \begin{align*}
    \PNLrisk (f) &=  -\Exp_{(\x,y)} \log \sum_{\z} \Ind(\z \land \KB \models y) \cdot \Pr\left[\z \mid \x; f, \KB\right]\,, \\
    \ABLrisk(f) &= -\Exp_{(\x,y)} \log \left(\Pr\left[ y, \bar{\z}\mid \x;f,\KB\right]\right)\,.\\
  \end{align*}

  To unify the proof, we introduce \cref{eq: a3bl} as a surrogate form:
  \begin{equation*}
    \AAABLrisk(f) = -\Exp_{(\x,y)} \log \left(\sum_{\bar{\z} \in \mathcal{N}(y)}\Pr\left[ y, \bar{\z}\mid \x;f,\KB\right]\right)\,,
  \end{equation*}
  where the set \(\mathcal{N}(y)\) denotes the candidate set satisfying:
  \[ \forall \z \in \mathcal{N},\, \z \land \KB \models y\,.\]

  In most practical scenarios, due to time or computational constraints, this set may not include all possible candidates.
  Furthermore, we reformulate the PNL risk using a union-based representation:
  \begin{equation*}
    \begin{aligned}
      \PNLrisk (f) &=  -\Exp_{(\x,y)} \log \sum_{\z \in A(y)} \Ind(\z \land \KB \models y) \cdot \Pr\left[\z \mid \x; f, \KB\right] \\
      &=  -\Exp_{(\x,y)} \log \left(\sum_{\z \in A(y)} \Pr\left[y, \z \mid \x; f, \KB\right]\right)\,.
    \end{aligned}
  \end{equation*}

  Consequently, \AAABLrisk{} emerges as the most flexible surrogate form. By adjusting the size of the candidate set, we can interpolate between the ABL risk and the PNL risk. Thus, it suffices to prove that for any candidate set \(\mathcal{N}(y)\), \AAABLrisk{} achieves the desired objective.

  Since the following properties hold:
  \begin{itemize}
    \item For any \(f\in \F\), the risk \(\AAABLrisk(f) \geq 0\).
    \item For the labeling function \(g\), the risk \(\AAABLrisk(g) = 0\).
  \end{itemize}
  Therefore, the minimum achievable value of this risk is strictly \(0\). For any \(f^* \in \arg\min_{f\in\F} \AAABLrisk(f)\), we have \(\AAABLrisk(f^*) = 0\), which implies that, for a fixed candidate set \(\mathcal{N}(y)\) and any \((\x,y)\):
  \begin{equation*}
    \begin{aligned}
      &\sum_{\bar{\z} \in \mathcal{N}(y)}\Pr\left[ y, \bar{\z}\mid \x;f^*,\KB\right] \\
      = &\sum_{\bar{\z} \in \mathcal{N}(y)}\Ind(\bar{\z} \land \KB \models y) \cdot p_{\theta^*} (\bar{\z} \mid \x) = 1\,.
    \end{aligned}
  \end{equation*}

  Consequently, any \(\bar{\z}\) predicted by the learning model \(f^*\) with a probability greater than zero will satisfy the knowledge base, i.e., \(\bar{\z} \land \KB \models y\). This ensures that the NeSy risk \(\Nesyrisk(f) = -\operatorname{\Exp}\limits_{(\x,y)}\left[ \Ind\left(f^*(\x) \land \KB \not\models y\right)\right]\) is also zero.
  Since the NeSy risk should also be greater or equal to zero, which means the hypothesis \(f^*\) is a minimizer of NeSy risk.
  Hence, the proof is complete.
\end{proof}

\subsection{Proof of \Cref{thm:curse of ambiguity}}\label{appendix: proof of curse of ambiguity}
\propCoA*

\begin{proof}
  By the definitions of \(\Nesyrisk\) and \(\truerisk\), we have:
  \[\Nesyrisk(f) = \operatorname{\Exp}\limits_{(\x,y)}\left[ \Ind\left(f(\x) \land \KB \not\models y\right)\right],\]
  and, thus,
  \[\truerisk(f) = \Exp_{x,z}\left[\Ind(f(x) \neq z)\right].\]

  Since \(\mathcal{T}\) is ambiguous, i.e., there exists a \(y \in \Y\) such that \(|A(y)| \geq 2\), we assume, without loss of generality, a sample pair \((\x_0, y_0) \in (\X^m, \Y)\) such that \( \{\z_1, \z_2\} \subseteq A(y_0)\).

  Given that the hypothesis space \(\F\) is sufficiently complex to shatter the task, we assume the existence of two hypotheses \(f_1\) and \(f_2\) that yield identical correct predictions for all inputs except \(\x_0\):
  \[
    \begin{cases}
      f_1(\x_0) = \z_1, \quad f_2(\x_0) = \z_2 \quad & \text{if } \x = \x_0, \\
      f_1(\x) = f_2(\x)                              & \text{otherwise.}
    \end{cases}
  \]

  By definition, as \(f_1\) and \(f_2\) yield identical predictions except at \(\x_0\), we have \(\Nesyrisk(f_1) = \Nesyrisk(f_2)\).
  However, since \(\z_1 \neq \z_2\), there exists at least one index \(k \in [m]\) such that \((\z_1)_k \neq (\z_2)_k\).
  Thus, by the definition of \(\truerisk\), we observe that \(\truerisk(f_1) \neq \truerisk(f_2)\), as at the sample \((\x_0)_k\), they produce two distinct recognition results.

  In this scenario, even if \(f_1\) represents the underlying ground truth mapping function, it is indistinguishable from \(f_2\), as both achieve zero risk under the optimized objective \(\Nesyrisk\).
  This concludes that \( \left(\Nesyrisk \to 0\right) \not\Rightarrow \left(\truerisk \to 0\right)\).
\end{proof}

\subsection{Proof of \Cref{thm:consistent risk}}\label{appendix: proof of consistent risk}
\lemmaCr*
\begin{proof}
  We first prove the direction:
  \[
    \left(\Nesyrisk \to 0\right) \Leftarrow \left(\truerisk \to 0\right)\,.
  \]
  This is evident because, as \(\truerisk\) approaches zero, \(f\) must correctly classify all input-output pairs, which consequently drives \(\Nesyrisk\) to zero as well.

  Next, we prove the direction:
  \(
    \left(\Nesyrisk \to 0\right) \Rightarrow \left(\truerisk \to 0\right).
  \)
  Equivalently, we prove the contrapositive:
  \[
    \left(\truerisk \not\to 0\right) \Rightarrow \left(\Nesyrisk \not\to 0\right)\,.
  \]

  Suppose \(\truerisk \not\to 0\). Then, there exist integers \(i, j \in [L]\) with \(i \neq j\) such that \(f\) misclassifies elements of the set
  \[
    \left<x\right>_i = \{x \mid x \in \X, g(x) = i\}
  \]
  as belonging to class \(j\).

  Recall that the DCSP of \(\mathcal{T}\) has a unique solution, which ensures that the correct labels are unambiguous. As the training set size grows, there must exist a sample \((\x, y) \in (\X^m, \Y)\) such that \(\text{set}(\x) \cap \left<x\right>_i \neq \emptyset\) and \(f(\x) \not\in A(y)\). This implies that \(\Nesyrisk \not\to 0\).

  Thus, by proving both directions, we complete the proof.
\end{proof}

\subsection{Proof of \Cref{thm: learnable}}\label{appendix: proof of learnable}
First, we recall that the learnability analysis depends on two assumptions.
\assumone*
\assumtwo*

Based on the assumptions, we first prove the below lemma, which states the sample complexity under the learnable case, when the hypothesis space is restricted hypotheses space.
\begin{lemma}\label{thm: sample complexity of learnable case}
  Consider a NeSy task \(\nesytask{}\) with above assumptions and \(d=0\).
  By applying empirical risk minimization, the hypothesis \(\hat{f} = \arg\min_{f \in \F^*} \empNesyrisk(f)\) ensures that \(\truerisk(\hat{f}) \leq \epsilon\,\) for any \(\epsilon > 0\), provided that the training size \(N\) satisfies the inequality:
  \begin{equation*}
    N > \frac{1}{\kappa} \cdot \log\left(\frac{|\B|}{\epsilon}\right)\,.
  \end{equation*}
\end{lemma}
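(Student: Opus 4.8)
The plan is to show that under the restricted hypothesis space $\F^*$ and $d=0$, the empirical NeSy risk minimizer $\hat f$ is, with high probability, forced to agree with the labeling function $g$ on every concept class, hence has zero concept risk — and in fact to make this quantitative so the stated sample bound drops out. The key observation is that because $\F^*$ forces consistent cluster-to-concept mappings, each hypothesis $f \in \F^*$ is identified with an assignment of a concept value to each of the $L$ clusters $\langle x\rangle_1,\dots,\langle x\rangle_L$; equivalently, $f$ corresponds to a candidate DCSP assignment. Under the no-conflict assumption the DCSP has a solution, and since $d=0$ that solution is unique and equals the one induced by $g$. So I would argue: if $\hat f \ne g$ as a cluster-to-concept mapping, then $\hat f$ is not a DCSP solution, so there is \emph{some} constraint — i.e. some concept sequence $\z \in \B$ arising as $\mathsf V(\x)$ for a sampled $(\x,y)$ — that $\hat f$ violates, meaning $\empNesyrisk(\hat f) > 0$. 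But $\empNesyrisk(g) = 0$, so ERM would never pick such an $\hat f$, \emph{provided} the training sample actually contains a witnessing constraint for whichever wrong mapping $\hat f$ might be.

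So the heart of the argument is a union-bound / covering step over the (finitely many) bad hypotheses. First I would enumerate the hypotheses in $\F^*$ that have nonzero concept risk; by Assumption~\ref{assumption: finite size} and the cluster structure there are at most $|\B|$-ish relevant "types" of disagreement, or more carefully: for each $f \in \F^*$ with $\truerisk(f) > 0$ that is nevertheless a candidate ERM output, there must be a concept sequence $\z_f \in \B$ that distinguishes $f$ from $g$ in the sense that whenever a sample $(\x,y)$ has $\mathsf V_g(\x) = \z_f$, $f$ produces an inconsistent answer. The probability that a single i.i.d. draw fails to be such a distinguishing witness is at most $1 - \kappa$ by Assumption~\ref{assumption: non-zero prob} (the "distinguishing" $\z_f$ is sampled with probability $\geq \kappa$). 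Hence the probability that $N$ draws all miss the witness for a fixed bad $f$ is at most $(1-\kappa)^N \le e^{-\kappa N}$. Taking a union bound over the at most $|\B|$ such bad hypotheses (here the task-specific constant in the theorem is exactly this count; I would state it as $|\B|$ or absorb it), the probability that ERM outputs any bad hypothesis is at most $|\B| \, e^{-\kappa N}$. Setting this $\le \epsilon$ gives $N > \frac{1}{\kappa}\log(|\B|/\epsilon)$, which is precisely the claimed bound, and on the complementary event $\hat f = g$ so $\truerisk(\hat f) = 0 \le \epsilon$.

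Concretely the steps in order are: (1) recall that $\F^*$ reduces each hypothesis to an $L$-tuple assignment and that $d=0$ means the unique DCSP solution is the $g$-induced one; (2) observe $\empNesyrisk(g)=0$ always, so any ERM output $\hat f$ satisfies $\empNesyrisk(\hat f)=0$; (3) show that $\empNesyrisk(\hat f)=0$ together with the uniqueness forces $\hat f = g$ \emph{unless} the sample failed to contain a distinguishing witness for $\hat f$; (4) for each fixed bad candidate $f$, identify a distinguishing concept sequence $\z_f \in \B$ and bound the miss probability of one draw by $1-\kappa$, hence of $N$ draws by $(1-\kappa)^N \le e^{-\kappa N}$; (5) union-bound over the $\le |\B|$ bad candidates to get failure probability $\le |\B| e^{-\kappa N}$; (6) solve $|\B| e^{-\kappa N} \le \epsilon$ for $N$, yielding $N > \frac1\kappa \log(|\B|/\epsilon)$, and conclude $\Pr[\truerisk(\hat f) \le \epsilon] \ge 1-\epsilon \ge 1-\delta$ once $N$ is also large enough in $\delta$ (or simply set $\epsilon \leftarrow \min(\epsilon,\delta)$).

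The main obstacle — and the step deserving the most care — is step (3)–(4): rigorously pinning down, for an \emph{arbitrary} wrong mapping $\hat f \in \F^*$, a single concept sequence $\z_f \in \B$ whose appearance in the training set necessarily exposes $\hat f$'s inconsistency. This is where the assumptions really bite: one must argue that since $\hat f \ne g$ differs on at least one cluster $\langle x\rangle_i$, and since the DCSP solution is unique, there exists a constraint (a pair $(\x,y)$ with $g$-concept-sequence in $\B$ and involving cluster $i$) on which $\hat f$ fails — otherwise $\hat f$ would be a second DCSP solution, contradicting $d=0$. Making the counting clean (why at most $|\B|$ bad hypotheses matter, rather than all of $\F^*$) and ensuring the chosen witness $\z_f$ genuinely lies in $\B$ so that Assumption~\ref{assumption: non-zero prob} applies, is the delicate part; the probabilistic tail bound afterwards is routine.
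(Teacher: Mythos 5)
Your proposal is correct and follows essentially the same route as the paper: both arguments rest on the observation that once every concept sequence in \(\B\) appears in the sample the unique DCSP solution pins down \(\hat f = g\), and both obtain the bound from \((1-\kappa)^N \le e^{-\kappa N}\) plus a union bound contributing the factor \(|\B|\). The only difference is that the paper union-bounds directly over the event that some sequence in \(\B\) is never sampled (which makes the \(|\B|\) count immediate), whereas you union-bound over ``bad hypotheses'' and then have to argue there are at most \(|\B|\) relevant ones --- the counting worry you flag yourself disappears if you simply index the union by the witness sequences rather than by the hypotheses they eliminate.
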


\begin{proof}
  Recall that empirical risk minimization, based on the neuro-symbolic risk \(\empNesyrisk\), corresponds to solving a derived constraint satisfaction problem over the restricted hypothesis space \(\F^*\).
  By \cref{thm:consistent risk}, if the training set includes all possible concept sequences, the minimum value of \(\Nesyrisk\) becomes zero.
  This ensures that \(\truerisk\) also attains a value of zero. Therefore, it is crucial to analyze the sampling process of the training data.

  Let \(Q\) denote the event that \emph{not all concept sequences are sampled in the training data}.
  Therefore, we conclude that the true risk is bounded by the probability of event \(Q\):
  \[\truerisk(\hat{f}) \leq \Pr[Q].\]

  To bound \(\truerisk\), it suffices to bound \(\Pr[Q]\). For any individual concept sequence \(\z_i\), the probability that it is not sampled after \(N\) draws is given by:
  \[\left(1 - p_i\right)^N \leq \left(1 - \kappa\right)^N\,.\]

  Applying the union-bound inequality, we derive:
  \[\Pr[Q] \leq |\B| \left(1 - \kappa\right)^N\,.\]

  Since \((1 - x) \leq \exp(-x)\) holds for \(x \geq 0\), we can further bound \(\truerisk(\hat{f})\) as follows:
  \[\truerisk(\hat{f}) \leq \Pr[Q] \leq |\B| \exp(-{N}\cdot \kappa)\,.\]

  Given that \(\truerisk(\hat{f}) \leq \epsilon\), it follows that:
  \[N \geq \frac{1}{\kappa}\cdot \log\left(\frac{|\B|}{\epsilon}\right)\,.\]

  This completes the proof of the proposition.
\end{proof}

\thmLearnable*
\begin{proof}
  The proof is divided into two parts:
  \begin{enumerate}
    \item If the disagreement \(d\) equals zero, the task is \emph{learnable}, and the sample complexity is \(\mathcal{O}(\frac{1}{\kappa}\cdot \log(|\mathcal{B}|/\epsilon))\).
    \item If the disagreement \(d\) is greater than zero, the DCSP solution space contains at least two distinct solutions, making the task \emph{unlearnable}.
  \end{enumerate}

  The first part follows directly from \cref{thm: sample complexity of learnable case}.
  Hence, we focus on proving the second part by contradiction.

  If the DCSP has multiple solutions, there exists \((\x,y) \in (\X^m, \Y)\) such that two distinct concept sequences \(\z_1\) and \(\z_2\) are valid, i.e., \(\z_1 \land \KB \models y\) and \(\z_2 \land \KB \models y\).

  Since both \(\hat{f}_1(\x) = \z_1\) and \(\hat{f}_2(\x) = \z_2\) are valid solutions for \((\x,y)\), and \(\z_1\) and \(\z_2\) are distinct, it follows that their true risks cannot be simultaneously zero. Thus, at least one of them must have a \(\truerisk\) greater than zero.
  Without loss of generality, assume that \(\truerisk(\hat{f}_1) = \epsilon_0 > 0\).

  Since both \(\hat{f}_1\) and \(\hat{f}_2\) achieve the minimal NeSy risk (which is zero), it is impossible to distinguish between them using learning techniques or by adding more data.
  Consequently, there is no integer \(N_\epsilon\) such that for any \(0 < \epsilon < \epsilon_0\), \(\truerisk(\hat{f}) < \epsilon\) holds when \(N \geq N_\epsilon\). This implies that the task is unlearnable.

  Combining both parts completes the proof.
\end{proof}

\subsection{Proof of \Cref{thm: average error bound}} \label{appendix: proof of average_error_bound}
\thmAEB*

\begin{proof}
  Recall that the DCSP solution disagreement \(d\) is given by \(d = L - \textrm{Union}(\mathcal{S})\), where \(\textrm{Union}(\mathcal{S})\) represents the common assignments among the solutions.
  Since the restricted hypothesis space ensures that instances with the same assigned label correspond to the same concept and vice versa.
  In the worst case, errors occur in at \(d\) classes, so the maximum true risk is \( \max_{f \in \F^*_{\text{ERM}}} \truerisk(f) = {d}/{L} \).
  Thus, the average error is bounded by:
  \(
    \mathcal{E}^* \leq \max_{f \in \F^*_{\text{ERM}}} \truerisk(f) = {d}/{L}.
  \)
\end{proof}

\clearpage
\section{Experiments}\label{appendix: detailed experiments}
We first introduce the experimental details, including data preparation, model setup, optimizer configurations, hyperparameters, and implementation details.
After that, we present experiments omitted from the main context due to space constraints.

\subsection{Experiment Details}
The raw datasets are based on MNIST~\citep{deng2012mnist}, KMNIST~\Citep{kmnist}, CIFAR-10~\Citep{krizhevsky2009cifar}, and SVHN~\citep{SVHN}.
For MNIST-style datasets, the learning model is based on LeNet~\citep{lecun1995convolutional}; other datasets use ResNet~\citep{he2016deep}.
The results were obtained using an Intel Xeon Platinum 8538 CPU and an NVIDIA A100-PCIE-40GB GPU on an Ubuntu 20.04 platform.

\subsubsection{Preparing data and model}
The construction of datasets is heavily based on algorithmic operations; thus, we rely on digit indices mapping from class indices to digit indices.
After that, different knowledge bases require different rules. Here, we base our approach on ABLKit~\citep{ABLkit2024}\footnote{\tiny\url{https://github.com/AbductiveLearning/ABLkit}} and the code of \citet{a3bl}\footnote{\tiny\url{https://github.com/Hao-Yuan-He/A3BL}}.
During the dataset construction process, we control the sample size by re-sampling data until the sequence size exceeds a threshold, denoted as \texttt{sample\_size}.
For \cref{fig:learnability}, the \texttt{sample\_size} is set to \(\mathtt{30,000}\), while for ensemble experiments it is set to \(\mathtt{120,000}\); other values are specified in the respective plots.
The reasoning model employs abductive reasoning, implemented using a cache-based search program~\citep{ABLkit2024}.

For an example of addition, the knowledge base is programmed as follows:
\begin{figure}[!htb]
  \centering
  \begin{lstlisting}[language=Python]
    class add_KB(KBBase):
      ...
      def logic_forward(self, nums):
        nums1, nums2 = split_list(nums)
        return digits_to_number(nums1) + digits_to_number(nums2)
  \end{lstlisting}
  \vspace{-0.35in}
  \caption{Example of addition knowledge base with Python program form.}
\end{figure}

For the modular addition task, the knowledge base is more complex:
\begin{figure}[!htb]
  \centering
  \begin{lstlisting}[language=Python]
    class Mod_KB(KBBase):
      ...
      def logic_forward(self, lsts):
          nums1, nums2, mod = parse_nums_and_mods(lsts)
          nums1, nums2 = digits_to_number(nums1), digits_to_number(nums2)
          return (nums1 + nums2) % mod
  \end{lstlisting}
  \vspace{-0.35in}
  \caption{Example of modular addition knowledge base with Python program form.}
\end{figure}

\subsubsection{Implementation details}
For all experiments, the random seeds are set to $\{2023,2024,2025,2026,2027\}$ for repeating five times.
To ensure the clustering property depicted in the definition of the restricted hypothesis space, the learning models are pre-trained. For LeNet5, we use self-supervised methods, with the weights available in the supplementary materials. For ResNet50, we load the pre-trained weights from the official PyTorch library, named \texttt{ResNet50\_Weights.IMAGENET1K\_V2}, and replace the last linear layer with \(\texttt{Linear(2048, 10)}\).

\paragraph{Optimization configurations} All experiments use AdamW, a weight-decay variant of Adam~\citep{adam}, as the optimizer, with a learning rate of \(\mathtt{0.0015}\) and betas set to \((\mathtt{0.9, 0.99})\). The batch size is set to 256, and unless otherwise noted, the number of epochs is set to \(\mathtt{10}\). The loss function used for optimization is cross-entropy, with further details available in the support code.

\subsection{Additional Experiments}
The additional experiments include setups using raw datasets not covered in the main text, specifically the CIFAR-10 and SVHN cases. Additionally, beyond the empirical analysis on the surrogate \cref{eq: a3bl}, here we refer to this as \AAABL~\citep{a3bl}, we also include an analysis of ABL and PNL to ensure a comprehensive evaluation.

\renewcommand{\algorithmiccomment}[1]{\hfill{\color{gray}\scalebox{1.2}{$\triangleright$} #1}} % Custom comment style
\begin{algorithm}[!tb]
    \caption{DCSP Solution}
    \label{alg:DCSP}
    \begin{algorithmic}[1]
        \REQUIRE NeSy task \(\nesytask\) and training set \(\{(\x_i, y_i)\}_{i=1}^N\)
        \STATE \( \mathsf{V,D,C} \gets \{V_i\}_{i=1}^L, \{D_i = \Z\}_{i=1}^L, \{\}\) \algorithmiccomment{Initialize the CSP triple}
        \FOR{\(i = 1 \ldots N\)}
            \STATE \(\mathsf{C} \gets \mathsf{C}\, \cup\, \{\mathsf{V}(\x_i) \land \KB \models y_i\}\) \algorithmiccomment{Initialize the constraints}
        \ENDFOR
        \STATE \(\mathcal{S} \gets \textsc{SolveCSP}(\mathsf{V,D,C})\) \algorithmiccomment{Call the CSP solver}
        \STATE \(d \gets L - \textrm{Union}(\mathcal{S})\)
        \RETURN \(\mathcal{S}, d\)
    \end{algorithmic}
  \end{algorithm}
\subsubsection{Observations of the structure of DCSP solution space}
The configurations of the modular addition task and their ensembles are illustrated in \cref{fig: configurations}. These configurations were computed using \cref{alg:DCSP} with the open-source library Choco~\citep{Prud2022Choco}. Through the investigation of the modular addition task, we observe that:
\emph{the number of DCSP solutions is highly related to DCSP solution disagreement; however, this relationship is not monotonic.}
Specifically, even a small number of DCSP solutions can result in high disagreement, as observed in the modular addition task with base \(k = \mathtt{10}\).

\begin{figure}[!htb]
  \centering
  \includegraphics[width=0.45\linewidth]{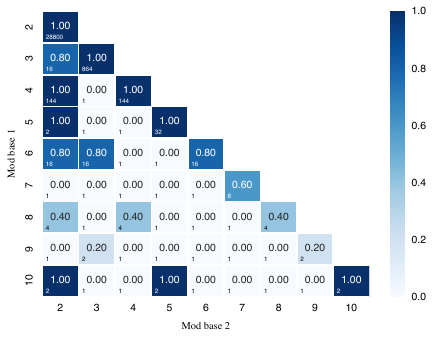}
  \caption{Configurations of modular additions and their ensembles. The center value represents the ratio of disagreement \(d\) to concept size \(L\), while the number of DCSP solutions is shown at the bottom-left.}
  \label{fig: configurations}
\end{figure}
\subsubsection{Impact of DCSP solution disagreement}
In \cref{app:fig:sample_complexity}, we present results under the same settings but using different raw datasets, specifically CIFAR-10 and SVHN, as shown in \cref{app:fig:sample_complexity}. As illustrated in the figure, the learnable case follows a trend similar to that in \cref{fig:sample_complexity}. However, in the unlearnable case, optimization becomes significantly more challenging due to high conflicts among valid DCSP solutions.
While one might suspect that this issue stems from the specific surrogate used, applying the same settings to ABL and PNL produces similar results~(cf. \cref{app:fig:sample_complexity_ABL} and \cref{app:fig:sample_complexity_PNL} respectively), confirming the generality of this observation.

\begin{figure*}[!tb]
  \centering
  \begin{subfigure}[b]{\linewidth}
    \includegraphics[width=\linewidth]{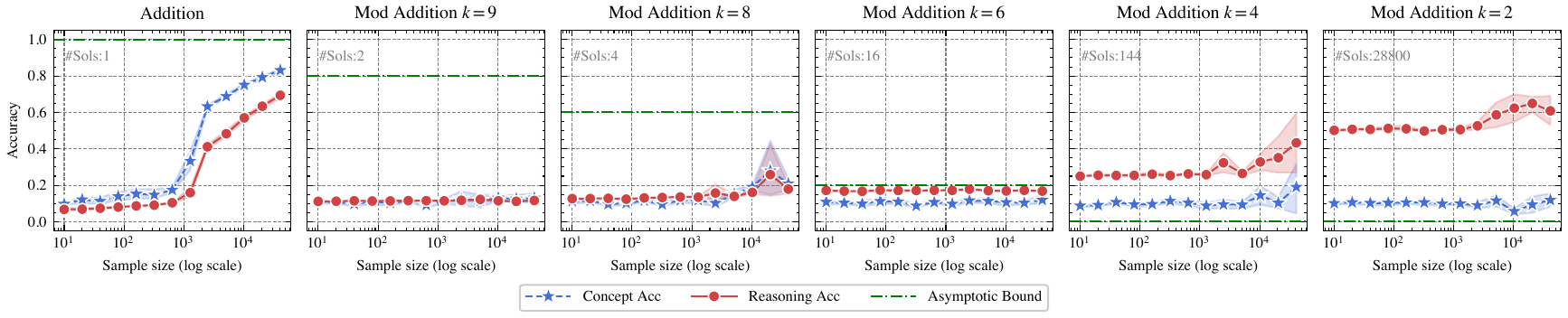}
    \caption{CIFAR-10}
  \end{subfigure}
  \begin{subfigure}[b]{\linewidth}
    \includegraphics[width=\linewidth]{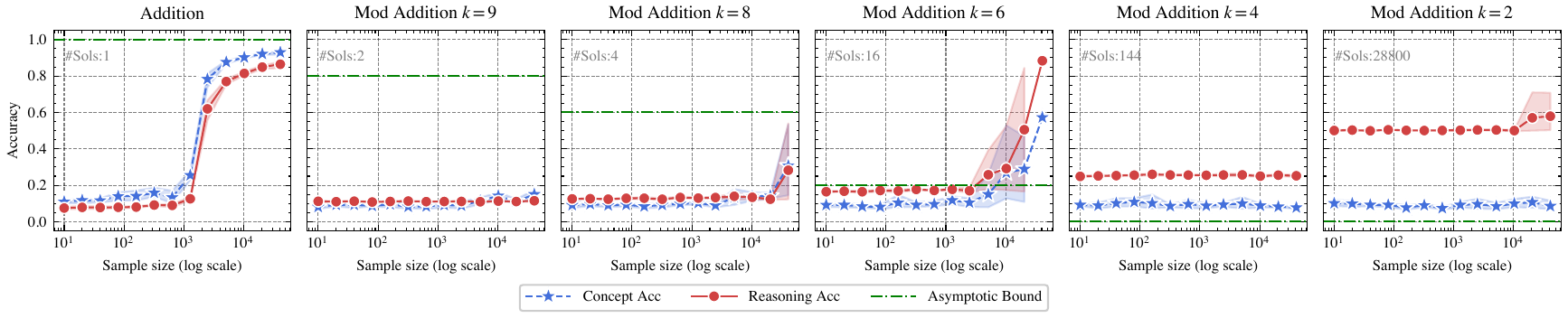}
    \caption{SVHN}
  \end{subfigure}
  \caption{\emph{Accuracies} versus \emph{sample size} for different NeSy tasks of \AAABL{}.
    The shadowed area denotes the standard error.
    The number of the DCSP solutions~({\color{gray}\#Sols}) is shown at the top left of each plot.
  The asymptotic bound~({\color{sgreen}green} line) from \cref{thm: average error bound} indicates that concept accuracy should exceed this bound as the sample size grows.}
  \label{app:fig:sample_complexity}
\end{figure*}

\begin{figure*}[!tb]
  \centering
  \begin{subfigure}[b]{\linewidth}
    \includegraphics[width=\linewidth]{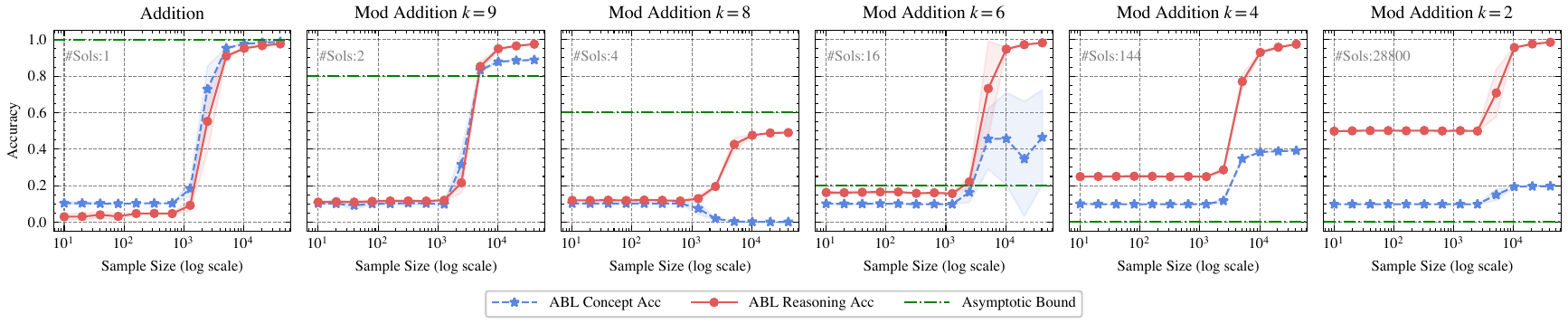}
    \caption{MNIST}
  \end{subfigure}
  \begin{subfigure}[b]{\linewidth}
    \includegraphics[width=\linewidth]{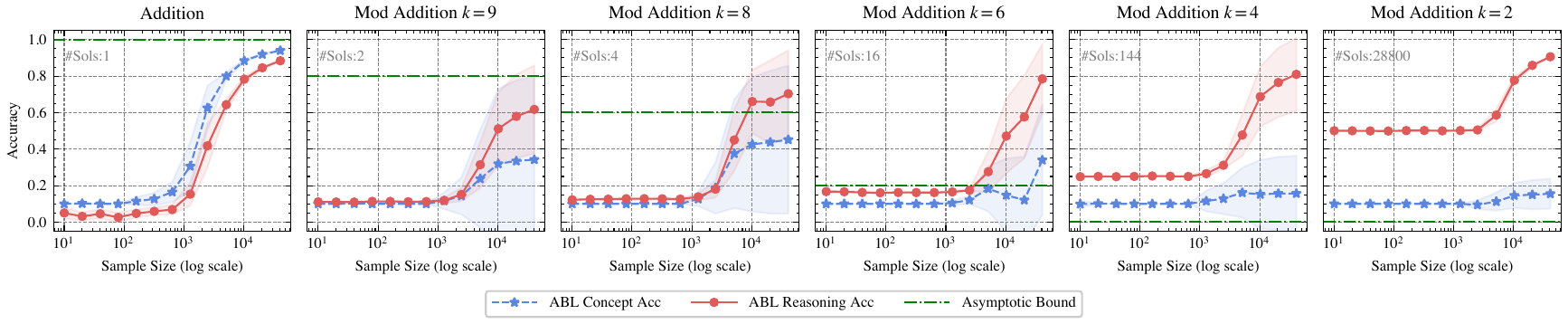}
    \caption{KMNIST}
  \end{subfigure}
  \begin{subfigure}[b]{\linewidth}
    \includegraphics[width=\linewidth]{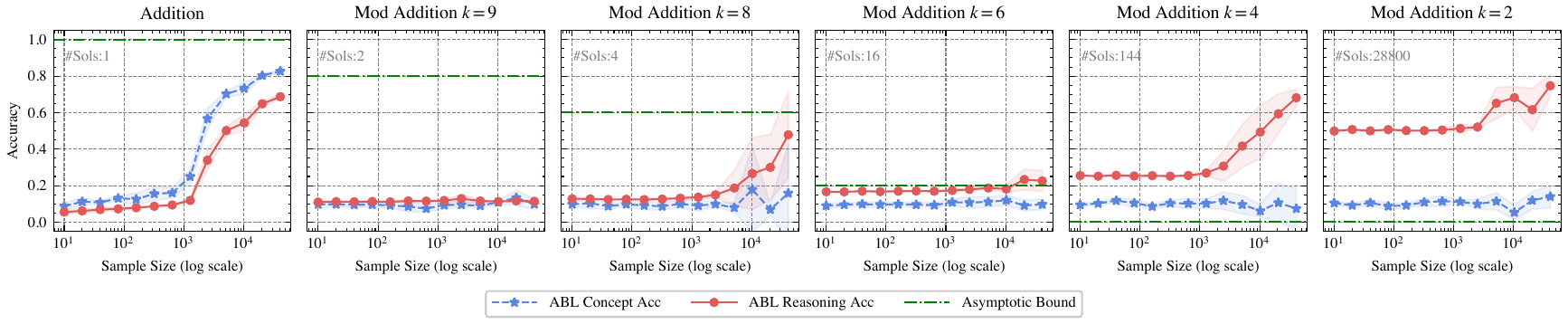}
    \caption{CIFAR-10}
  \end{subfigure}
  \begin{subfigure}[b]{\linewidth}
    \includegraphics[width=\linewidth]{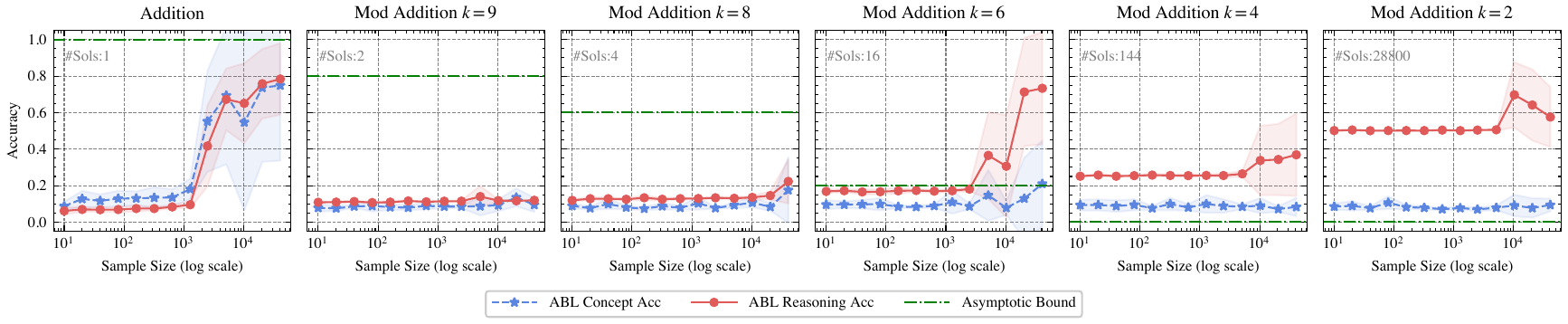}
    \caption{SVHN}
  \end{subfigure}
  \caption{\emph{Accuracies} versus \emph{sample size} for different NeSy tasks of ABL.
    The shadowed area denotes the standard error.
    The number of the DCSP solutions~({\color{gray}\#Sols}) is shown at the top left of each plot.
  The asymptotic bound~({\color{sgreen}green} line) from \cref{thm: average error bound} indicates that concept accuracy should exceed this bound as the sample size grows.}
  \label{app:fig:sample_complexity_ABL}
\end{figure*}

\begin{figure*}[!tb]
  \centering
  \begin{subfigure}[b]{\linewidth}
    \includegraphics[width=\linewidth]{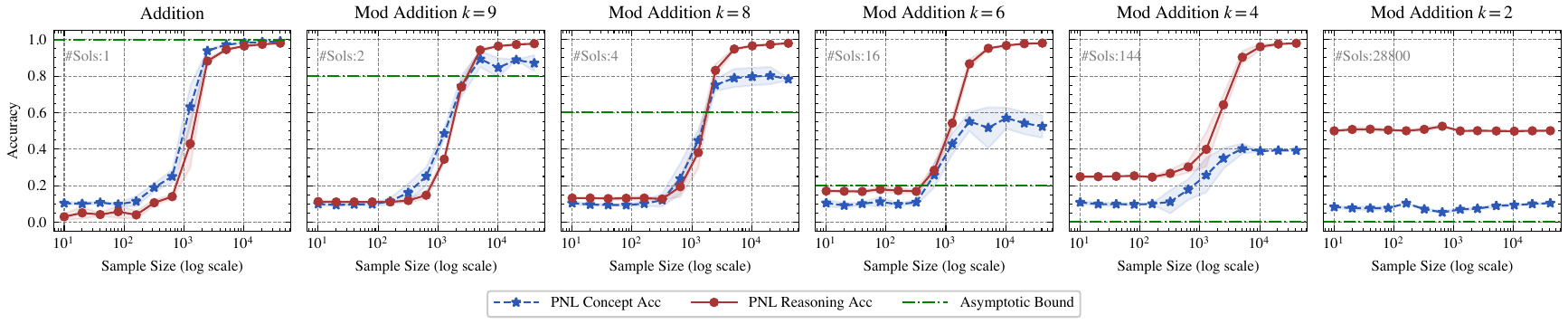}
    \caption{MNIST}
  \end{subfigure}
  \begin{subfigure}[b]{\linewidth}
    \includegraphics[width=\linewidth]{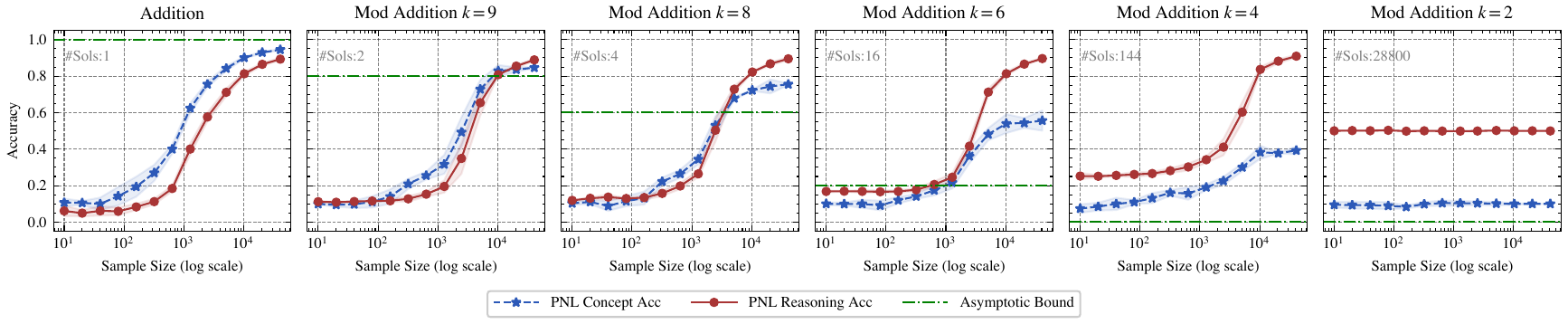}
    \caption{KMNIST}
  \end{subfigure}
  \begin{subfigure}[b]{\linewidth}
    \includegraphics[width=\linewidth]{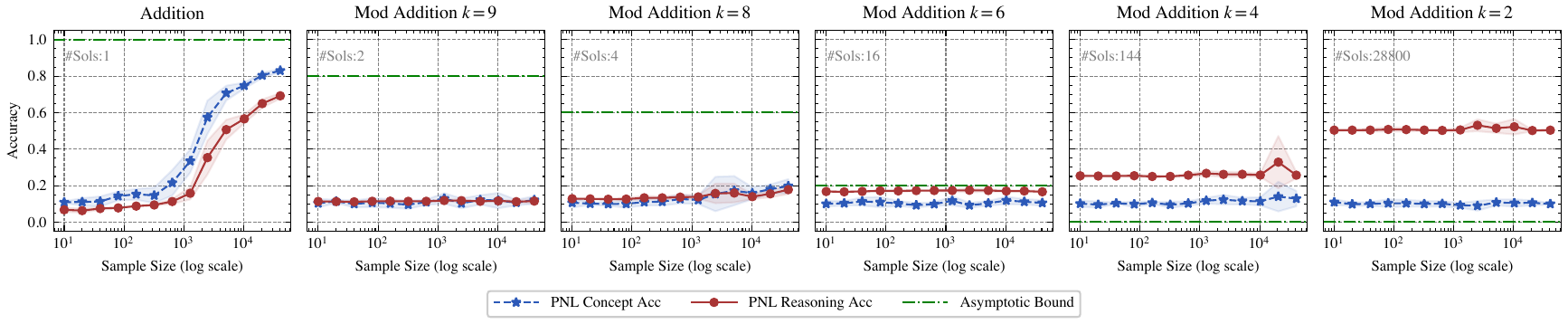}
    \caption{CIFAR-10}
  \end{subfigure}
  \begin{subfigure}[b]{\linewidth}
    \includegraphics[width=\linewidth]{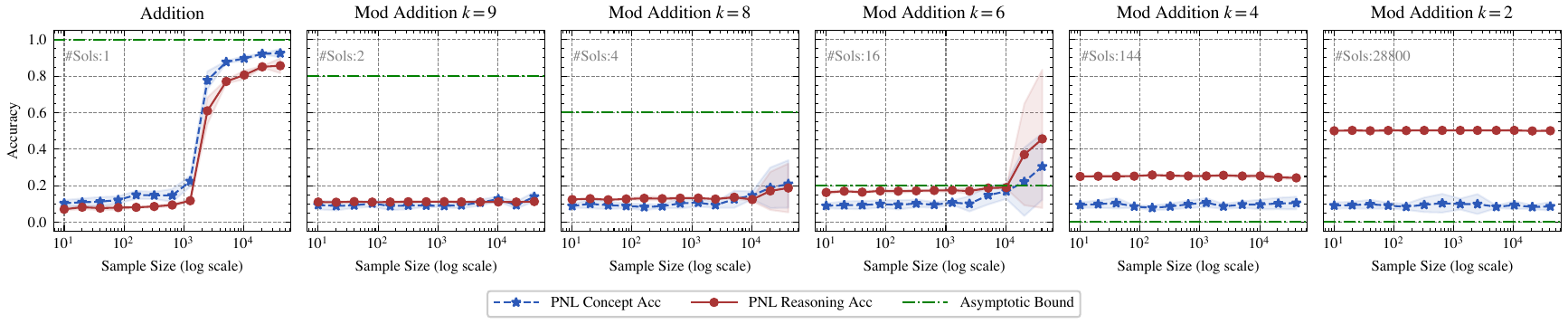}
    \caption{SVHN}
  \end{subfigure}
  \caption{\emph{Accuracies} versus \emph{sample size} for different NeSy tasks of PNL.
    The shadowed area denotes the standard error.
    The number of the DCSP solutions~({\color{gray}\#Sols}) is shown at the top left of each plot.
  The asymptotic bound~({\color{sgreen}green} line) from \cref{thm: average error bound} indicates that concept accuracy should exceed this bound as the sample size grows.}
  \label{app:fig:sample_complexity_PNL}
\end{figure*}

\clearpage

\subsubsection{Ensemble of unlearnable NeSy tasks}\label{app:details of ensemble experiments}
Here we present additional combinations of unlearnable NeSy tasks.
In \cref{app: fig: ensemble of unlearnable tasks}, we provide an overview of all ensemble combinations using a heatmap.
After that, we present a more fine-grained analysis.
In \cref{app: fig: ensemble of unlearnable tasks: unlearnable} and \cref{app: fig: ensemble of unlearnable tasks: learnable}, we show cases where the ensemble approach fails and succeeds respectively.

\begin{figure*}[!htb]
  \begin{subfigure}[b]{.4\linewidth}
    \centering
    \includegraphics[width=.8\linewidth]{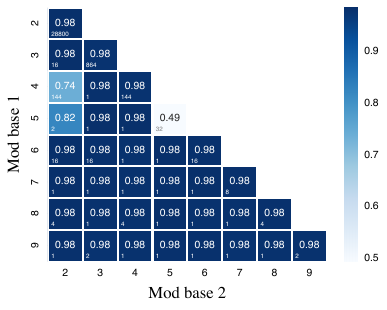}
  \end{subfigure} \hfill
  \begin{subfigure}[b]{.4\linewidth}
    \centering
    \includegraphics[width=.8\linewidth]{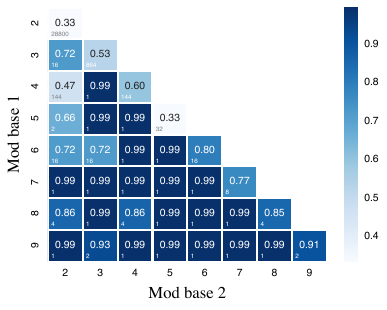}
  \end{subfigure}
  \vspace{-0.15in}
  \caption{\emph{Heatmaps} of ensemble mod addition tasks.
    Left: reasoning accuracy for different ensembles of mod bases; Right: concept accuracy for different ensembles of mod bases.
  The bottom-left corner of each cell shows the number of DCSP solutions.}
  \label{app: fig: ensemble of unlearnable tasks}
  \vspace{0.1in}
\end{figure*}

\begin{figure*}[!tb]
  \centering
  \begin{subfigure}[b]{\linewidth}
    \begin{subfigure}[b]{.495\linewidth}
      \centering
      \includegraphics[width=\linewidth]{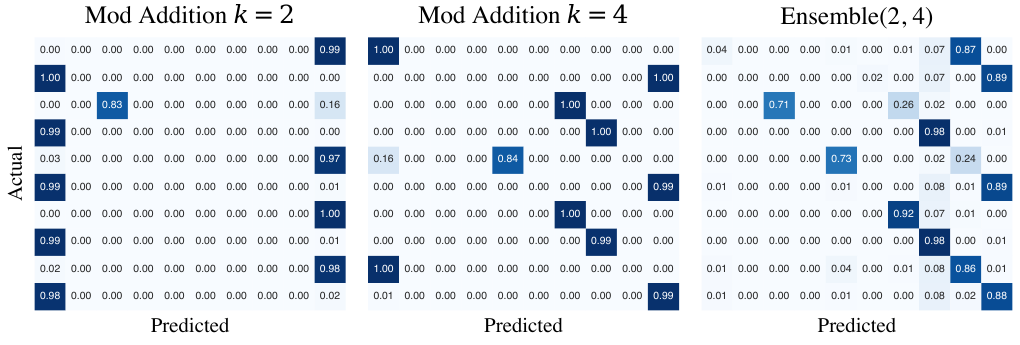}
    \end{subfigure} \hfill
    \begin{subfigure}[b]{.495\linewidth}
      \centering
      \includegraphics[width=\linewidth]{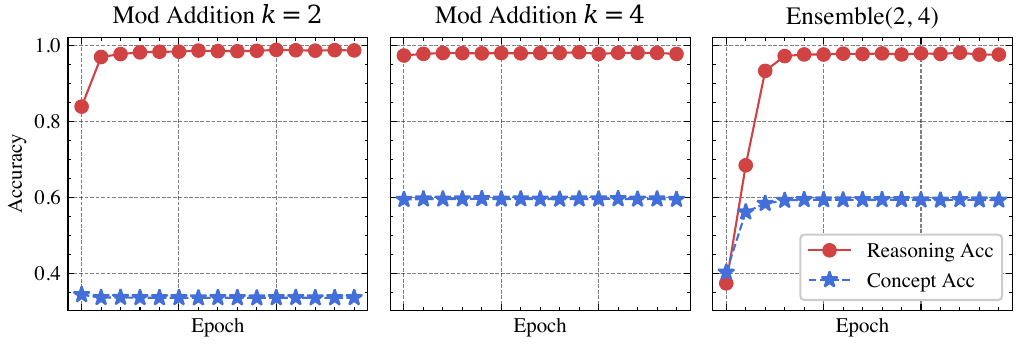}
    \end{subfigure}
  \end{subfigure}

  \begin{subfigure}[b]{\linewidth}
    \begin{subfigure}[b]{.495\linewidth}
      \centering
      \includegraphics[width=\linewidth]{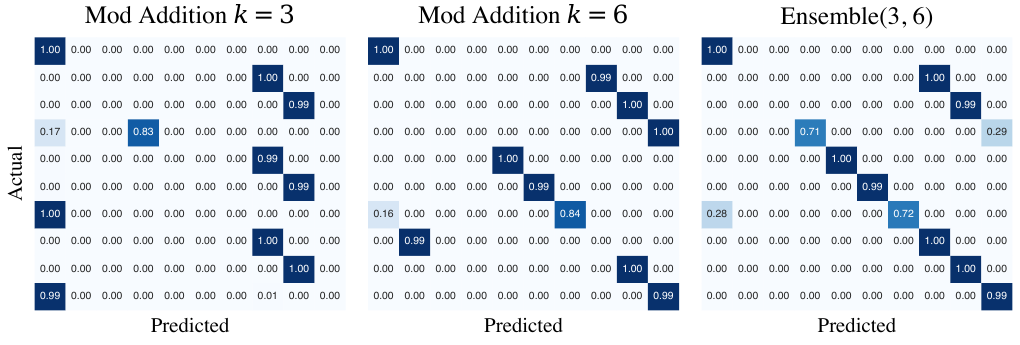}
    \end{subfigure} \hfill
    \begin{subfigure}[b]{.495\linewidth}
      \centering
      \includegraphics[width=\linewidth]{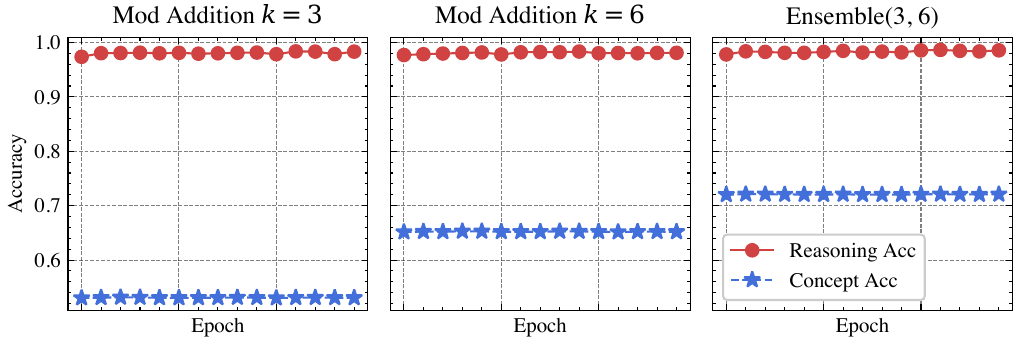}
    \end{subfigure}
  \end{subfigure}

  \begin{subfigure}[b]{\linewidth}
    \begin{subfigure}[b]{.495\linewidth}
      \centering
      \includegraphics[width=\linewidth]{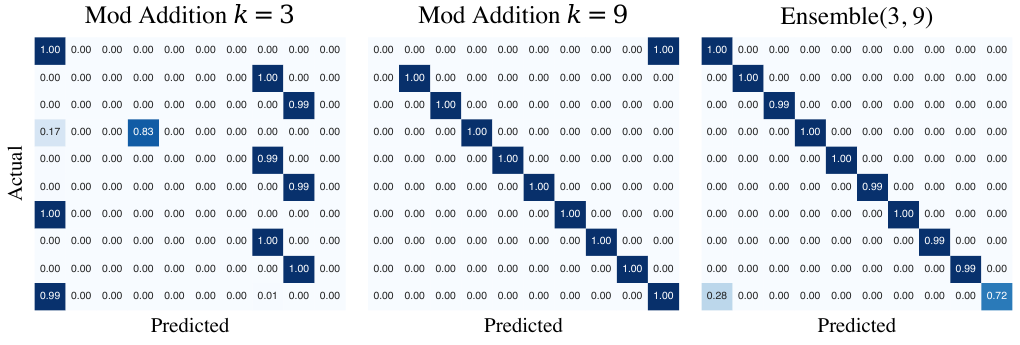}
    \end{subfigure} \hfill
    \begin{subfigure}[b]{.495\linewidth}
      \centering
      \includegraphics[width=\linewidth]{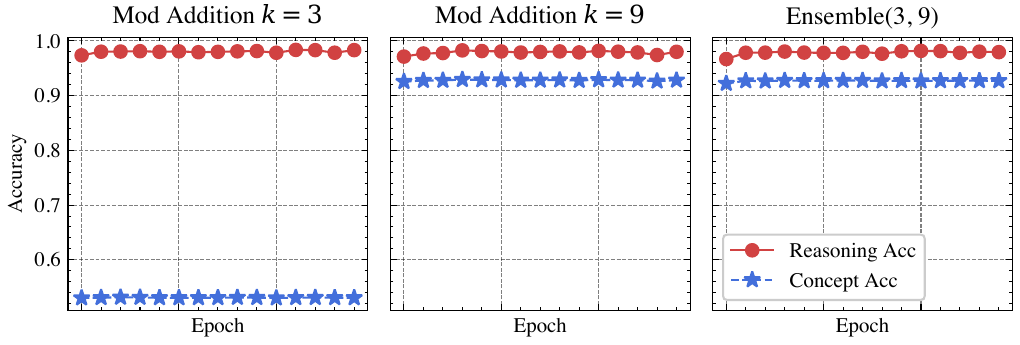}
    \end{subfigure}
  \end{subfigure}

  \begin{subfigure}[b]{\linewidth}
    \begin{subfigure}[b]{.495\linewidth}
      \centering
      \includegraphics[width=\linewidth]{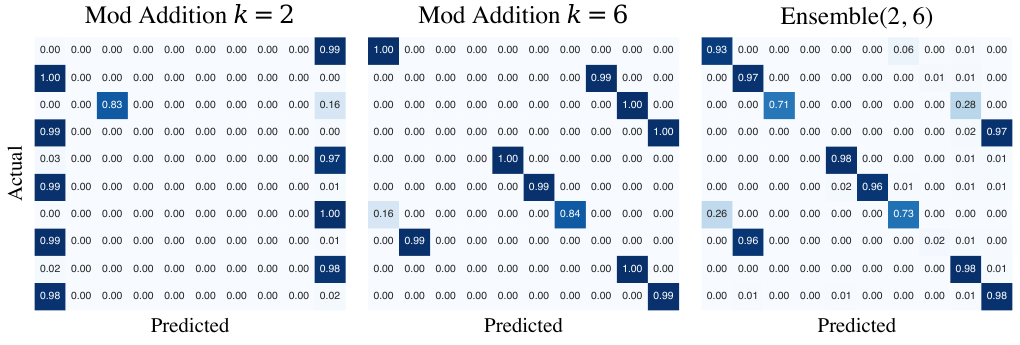}
    \end{subfigure} \hfill
    \begin{subfigure}[b]{.495\linewidth}
      \centering
      \includegraphics[width=\linewidth]{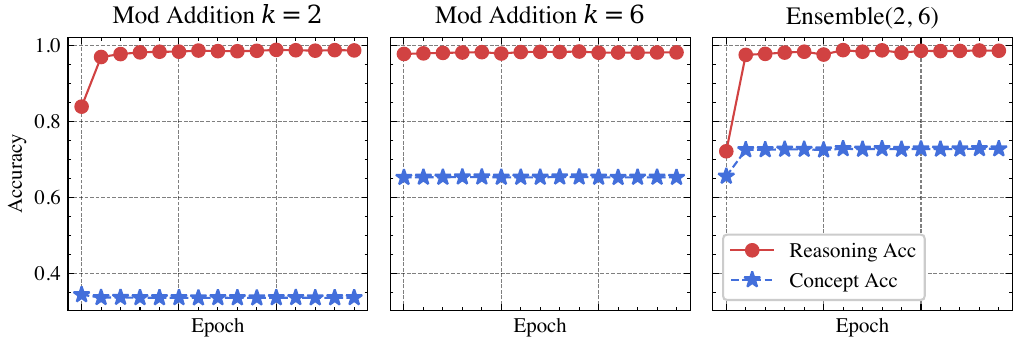}
    \end{subfigure}
  \end{subfigure}

  \caption{\emph{Ensemble} of unlearnable NeSy tasks, \emph{failed} case.
    The left shows confusion matrices and the right displays accuracy curves. 
    After the ensemble, the tasks are still unlearnable.
  }

  \label{app: fig: ensemble of unlearnable tasks: unlearnable}
\end{figure*}

\begin{figure*}[!tb]
  \begin{subfigure}[b]{\linewidth}
    \begin{subfigure}[b]{.495\linewidth}
      \centering
      \includegraphics[width=\linewidth]{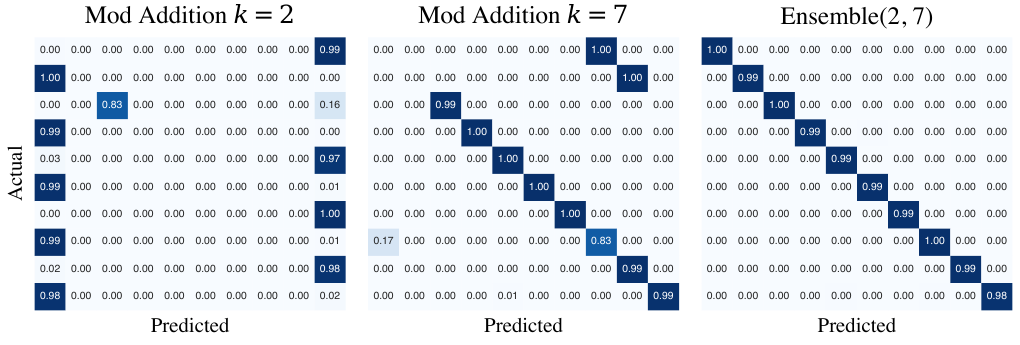}
    \end{subfigure} \hfill
    \begin{subfigure}[b]{.495\linewidth}
      \centering
      \includegraphics[width=\linewidth]{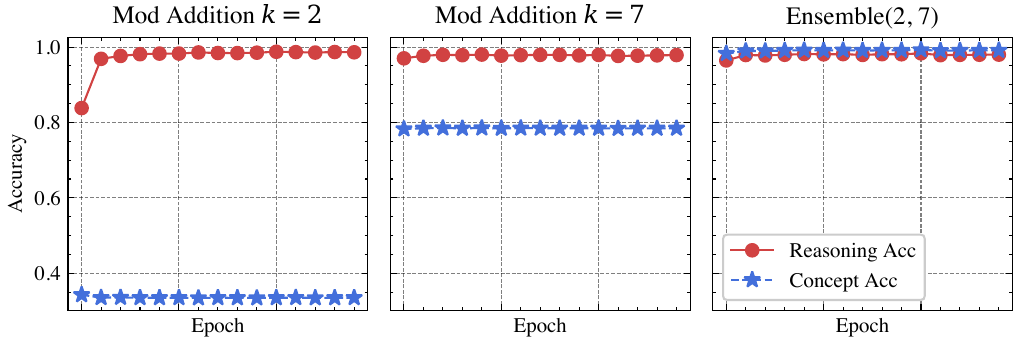}
    \end{subfigure}
  \end{subfigure}

  \begin{subfigure}[b]{\linewidth}
    \begin{subfigure}[b]{.495\linewidth}
      \centering
      \includegraphics[width=\linewidth]{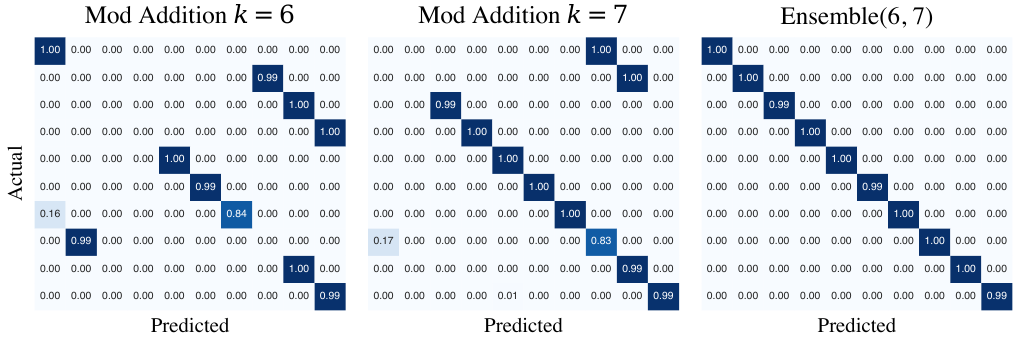}
    \end{subfigure} \hfill
    \begin{subfigure}[b]{.495\linewidth}
      \centering
      \includegraphics[width=\linewidth]{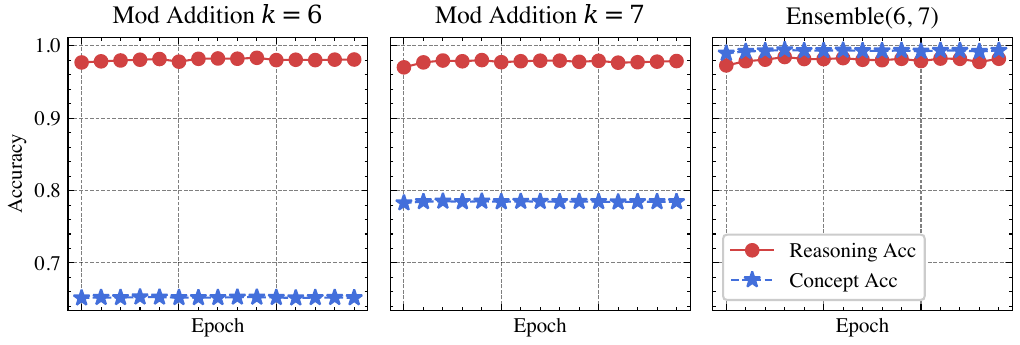}
    \end{subfigure}
  \end{subfigure}

  \begin{subfigure}[b]{\linewidth}
    \begin{subfigure}[b]{.495\linewidth}
      \centering
      \includegraphics[width=\linewidth]{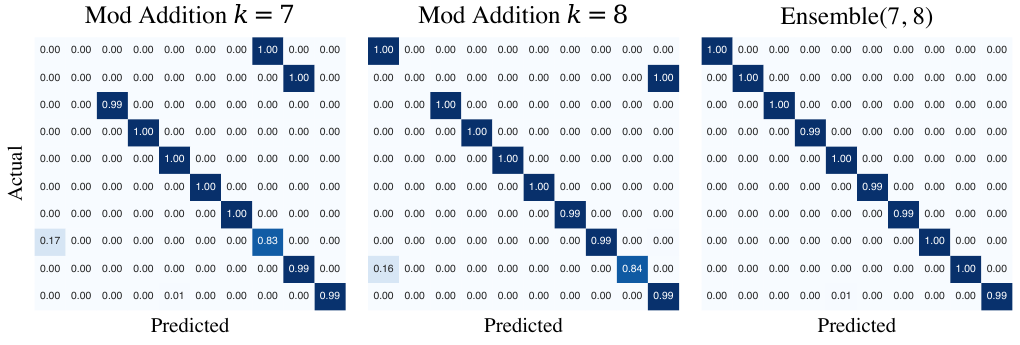}
    \end{subfigure} \hfill
    \begin{subfigure}[b]{.495\linewidth}
      \centering
      \includegraphics[width=\linewidth]{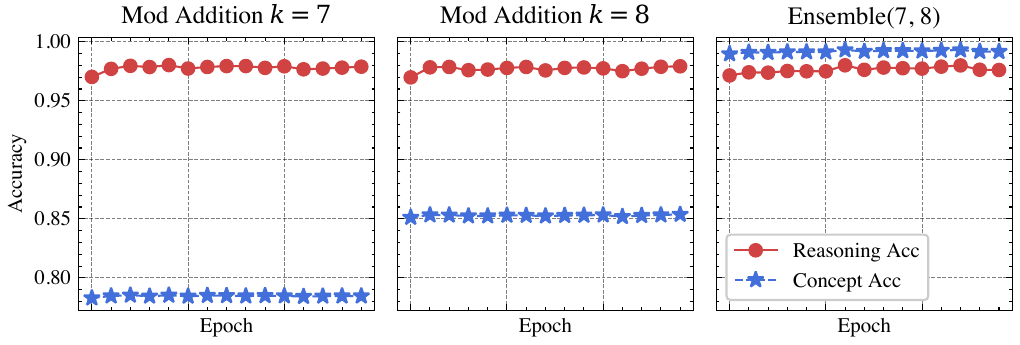}
    \end{subfigure}
  \end{subfigure}

  \begin{subfigure}[b]{\linewidth}
    \begin{subfigure}[b]{.495\linewidth}
      \centering
      \includegraphics[width=\linewidth]{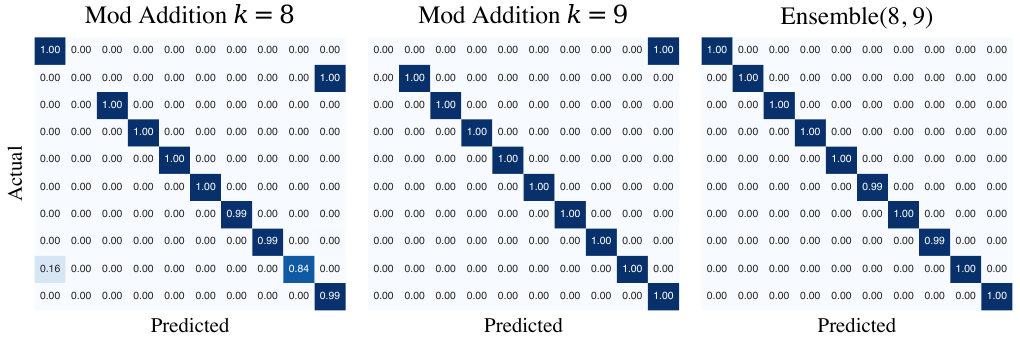}
    \end{subfigure} \hfill
    \begin{subfigure}[b]{.495\linewidth}
      \centering
      \includegraphics[width=\linewidth]{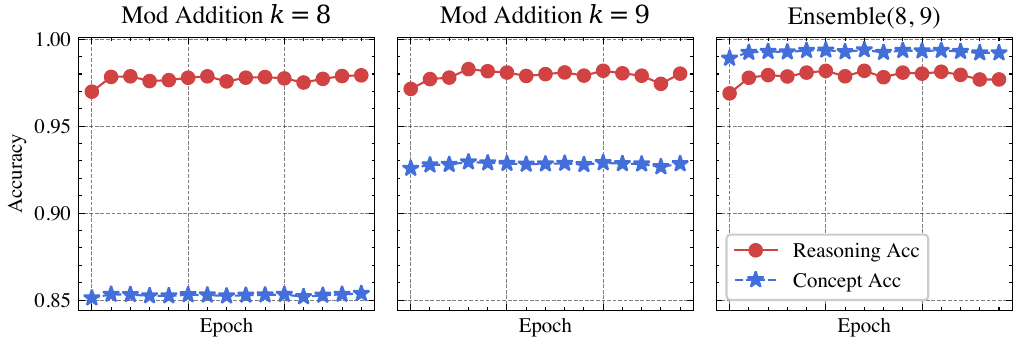}
    \end{subfigure}
  \end{subfigure}

  \caption{\emph{Ensemble} of unlearnable NeSy tasks, \emph{successed} case.
  The left shows confusion matrices and the right displays accuracy curves. 
  After the ensemble, the tasks become learnable.
}

  \label{app: fig: ensemble of unlearnable tasks: learnable}
\end{figure*}
By observing the experiments, we find that  adopting the ensemble perspective can enrich the benchmark diversity in the NeSy field~(e.g., \texttt{rsbench}, \citealt{bortolotti2024RSbench}), providing a clear and controllable methodology to achieve this.
\clearpage

\end{document}